\DeclareMathOperator*{\argmin}{argmin}
\DeclareMathOperator*{\tr}{tr}
\DeclareMathOperator*{\diag}{diag}
\DeclareMathOperator*{\Diag}{Diag}
\newtheorem{definition}{Definition}
\newtheorem{proposition}{Proposition}
\DeclareMathAlphabet{\mathonebb}{U}{bbold}{m}{n}
\newcommand{\ind}{\ensuremath{\mathonebb{1}}}
\title{\textbf{Trace Lasso: a trace norm regularization \\
for correlated designs}}
\author{
\hspace{\textwidth} \\
\textbf{Edouard Grave} \hfill \texttt{edouard.grave@inria.fr} \\[.2cm]
\textbf{Guillaume Obozinski} \hfill \texttt{guillaume.obozinski@inria.fr}  \\[.2cm]
\textbf{Francis Bach} \hfill \texttt{francis.bach@inria.fr}  \\[.2cm]
\begin{flushleft}
\textit{INRIA - Sierra Project-Team} \\
\textit{Laboratoire d'Informatique de l'\'Ecole Normale Sup\'erieure} \\
\textit{Paris, France}
\end{flushleft}
}
\begin{document}

\begin{center}
\LARGE
\textbf{Trace Lasso: a trace norm regularization \\
for correlated designs}
\end{center}

\vspace{10mm}

{
\large \noindent
\textbf{Edouard Grave} \hfill \texttt{edouard.grave@inria.fr} \\
\textbf{Guillaume Obozinski} \hfill \texttt{guillaume.obozinski@inria.fr} \\
\textbf{Francis Bach} \hfill \texttt{francis.bach@inria.fr} \\
}
\textit{INRIA - Sierra project-team} \\
\textit{Laboratoire d'Informatique de l'\'Ecole Normale Sup\'erieure} \\
\textit{Paris, France}

\vspace{15mm}

\begin{abstract}
Using the $\ell_1$-norm to regularize the estimation of  the parameter vector of a linear model leads to an unstable estimator when covariates are highly correlated. In this paper, we introduce a new penalty function which takes into account the correlation of the design matrix to stabilize the estimation. This norm, called the trace Lasso, uses the trace norm, which is a convex surrogate of the rank, of the selected covariates as the criterion of model complexity. We analyze the properties of our norm, describe an optimization algorithm based on reweighted least-squares, and illustrate the behavior of this norm on synthetic data, showing that it is more adapted to strong correlations than competing methods such as the elastic net.
\end{abstract}

\section{Introduction}
The concept of parsimony is central in many scientific domains. In the context of statistics, signal processing or machine learning, it takes the form of variable or feature selection problems, and is commonly used in two situations: first, to make the model or the prediction more interpretable or cheaper to use, i.e., even if the underlying problem does not admit sparse solutions, one looks for the best sparse approximation. Second, sparsity can also be used given prior knowledge that the model should be sparse. Many methods have been designed to learn sparse models, namely methods based on combinatorial optimization \cite{mallat1993, zhang2008}, Bayesian inference~\cite{seeger2008} or convex optimization~\cite{tibshirani1996, chen1999}.

In this paper, we focus on the regularization by sparsity-inducing norms. The simplest example of such norms is the $\ell_1$-norm, leading to the Lasso, when used within a least-squares framework. In recent years, a large body of work has shown that the Lasso was performing optimally in high-dimensional low-correlation settings, both in terms of prediction~\cite{bickel2009}, estimation of parameters or estimation of supports~\cite{zhao2006, wainwright2009}. However, most data exhibit strong correlations, with various correlation structures, such as clusters (i.e., close to block-diagonal covariance matrices)  or sparse graphs, such as for example  problems involving sequences (in which case, the covariance matrix is close to a Toeplitz matrix~\cite{golub1996}). In these situations, the Lasso is known to have stability problems: although its predictive performance is not disastrous, the selected predictor may vary a lot (typically, given two correlated variables, the Lasso will only select one of the two, at random).

Several remedies have been proposed to this instability. First, the elastic net~\cite{zou2005} adds a strongly convex penalty term (the squared $\ell_2$-norm) that will stabilize selection (typically, given two correlated variables, the elastic net will  select the two variables).  However, it is blind to the exact correlation structure, and while strong convexity is required for some variables, it is not for other variables. Another solution is to consider the group Lasso, which will divide the predictors into groups and penalize the sum of the $\ell_2$-norm of these groups~\cite{yuan2006}. This is known to accomodate strong correlations within groups~\cite{bach2008}; however it requires to know the group in advance, which is not always possible. A third line of research has focused on sampling-based techniques~\cite{bach2008bolasso, liu2010, meinshausen2010}.

An ideal regularizer should thus be adapted to the design (like the group Lasso), but without requiring human intervention (like the elastic net); it should thus add strong convexity only where needed, and not modifying variables where things behave correctly. In this paper, we propose a new norm towards this end.

More precisely we make the following contributions:

\begin{itemize}
\item We propose in Section~\ref{def}  a new norm based on the trace norm (a.k.a.~nuclear norm) that interpolates between the $\ell_1$-norm and the $\ell_2$-norm depending on correlations.
\item We show that there is a unique minimum when penalizing with this norm in Section~\ref{singleopt}.
\item We provide optimization algorithms based on reweighted least-squares in Section~\ref{opt}.
\item We study the second-order expansion around independence and relate to existing work on including correlations in Section~\ref{dev_lasso}.
\item We perform synthetic experiments in Section~\ref{exp}, where we show that the trace Lasso outperforms existing norms in strong-correlation regimes.
\end{itemize}

\paragraph{Notations.}
Let $\mathbf{M} \in \mathbb{R}^{n \times p}$. The columns of $\mathbf{M}$ are noted using superscript, i.e.,  $\mathbf{M}^{(i)}$ denotes the $i$-th column,  while the rows are noted using subscript, i.e.,  $\mathbf{M}_i$ denotes the $i$-th row. For $\mathbf{M} \in \mathbb{R}^{p \times p}$, $\diag(\mathbf{M}) \in \mathbb{R}^{p}$ is the diagonal of the matrix $\mathbf{M}$, while for $\mathbf{u} \in \mathbb{R}^p$, $\Diag(\mathbf{u}) \in \mathbb{R}^{p \times p}$ is the diagonal matrix whose diagonal elements are the $u_i$. Let $S$ be a subset of $\{1, ..., p \}$, then $\mathbf{u}_S$ is the vector $\mathbf{u}$ restricted to the support $S$, with $0$ outside the support $S$. We denote by $\mathbb{S}_p$ the set of symmetric matrices of size $p$. We will use various matrix norms, here are the notations we use:
\begin{itemize}
\item $\| \mathbf{M} \|_*$ is the trace norm, \emph{i.e.}, the sum of the singular values of the matrix $\mathbf{M}$, 
\item $\| \mathbf{M} \|_{op}$ is the operator norm, \emph{i.e.}, the maximum singular value of the matrix $\mathbf{M}$,
\item $\| \mathbf{M} \|_{F}$ is the Frobenius norm, \emph{i.e.}, the $\ell_2$-norm of the singular values, which is also equal to $\sqrt{\tr (\mathbf{M}^{\top} \mathbf{M})}$,
\item $\| \mathbf{M} \|_{2,1}$ is the sum of the $\ell_2$-norm of the columns of $\mathbf{M}$:
$
\displaystyle{\| \mathbf{M} \|_{2,1} = \sum_{i = 1}^p \| \mathbf{M}^{(i)} \|_2}
$.
\end{itemize}

\section{Definition and properties of the trace Lasso}
\label{def}
We consider the problem of predicting $y \in \mathbb{R}$, given a vector $\mathbf{x} \in \mathbb{R}^p$, assuming a linear model
\begin{equation*}
y = \mathbf{w}^{\top}\mathbf{x} + \varepsilon,
\end{equation*}
where $\varepsilon$ is (Gaussian) noise with mean $0$ and variance $\sigma^2$. Given a training set $\mathbf{X} = (\mathbf{x}_1, ..., \mathbf{x}_n)^{\top}~\in~\mathbb{R}^{n \times p}$ and $\mathbf{y} = (y_1, ..., y_n)^{\top} \in \mathbb{R}^{n}$, a widely used method to estimate the parameter vector $\mathbf{w}$ is the penalized empirical risk minimization
\begin{equation}
\label{erm}
\mathbf{\hat w} \in \argmin_{\mathbf{w}} \ \frac{1}{n} \sum_{i=1}^n \ell(y_i, \mathbf{w}^{\top} \mathbf{x}_i) + \lambda f (\mathbf{w}),
\end{equation}
where $\ell$ is a loss function used to measure the error we make by predicting $\mathbf{w}^{\top} \mathbf{x}_i$ instead of $y_i$, while $f$ is a regularization term used to penalize complex models. This second term helps avoiding overfitting, especially in the case where we have many more parameters than observation, \emph{i.e.},~$n~\ll~p$. 

\subsection{Related work}
We will now present some classical penalty functions for linear models which are widely used in the machine  learning  and statistics community. The first one, known as Tikhonov regularization~\cite{tikhonov1963} or ridge regression~\cite{hoerl1970}, is the squared $\ell_2$-norm. When used with the square loss, estimating the parameter vector $\mathbf{w}$ is done by solving a linear system. One of the main drawbacks of this penalty function is the fact that it does not perform variable selection and thus does not behave well in sparse high-dimensional settings.

Hence, it is natural to penalize linear models by the number of variables used by the model. Unfortunately, this criterion, sometimes denoted by $\| \cdot \|_0$ ($\ell_0$-penalty), is not convex and solving the problem in Eq.~(\ref{erm}) is generally NP-hard~\cite{davis1997}. Thus, a convex relaxation for this problem was introduced, replacing the size of the selected subset by the $\ell_1$-norm of $\mathbf{w}$. This estimator is known as the Lasso~\cite{tibshirani1996} in the statistics community and basis pursuit~\cite{chen1999} in signal processing. It was later shown that under some assumptions, the two problems were in fact equivalent (see for example~\cite{candes2005} and references therein). 

When two predictors are highly correlated, the Lasso has a very unstable behavior: it may only select the variable that is the most correlated with the residual. On the other hand, the Tikhonov regularization tends to shrink coefficients of correlated variables together, leading to a very stable behavior. In order to get the best of both worlds, stability and variable selection, Zou and Hastie introduced the elastic net~\cite{zou2005}, which is the sum of the $\ell_1$-norm and squared $\ell_2$-norm. Unfortunately, this estimator needs two regularization parameters and is not adaptive to the precise correlation structure of the data. Some authors also proposed to use pairwise correlations between predictors to interpolate more adaptively between the $\ell_1$-norm and squared $\ell_2$-norm, by introducing the pairwise elastic net~\cite{lorbert2010} (see comparisons with our approach in Section~\ref{exp}).

Finally, when one has more knowledge about the data, for example clusters of variables that should be selected together, one can use the group Lasso~\cite{yuan2006}. Given a partition $(S_i)$ of the set of variables, it is defined as the sum of the $\ell_2$-norms of the restricted vectors $\mathbf{w}_{S_i}$:
\begin{equation*}
\| \mathbf{w} \|_{GL} = \sum_{i=1}^k \| \mathbf{w}_{S_i} \|_2.
\end{equation*}
The effect of this penalty function is to introduce sparsity at the group level: variables in a group are selected altogether. One of the main drawback of this method, which is also sometimes one of its quality, is the fact that one needs to know the partition of the variables, and so one needs to have a good knowledge of the data.

\subsection{The ridge, the Lasso and the trace Lasso}
In this section, we show that Tikhonov regularization and the Lasso penalty can be viewed as norms of the matrix $\mathbf{X} \Diag (\mathbf{w})$. We then introduce a new norm involving this matrix. 

The solution of empirical risk minimization penalized by the $\ell_1$-norm or $\ell_2$-norm is not equivariant by rescaling of the predictors $\mathbf{X}^{(i)}$, so it is common to normalize the predictors. When normalizing the predictors $\mathbf{X}^{(i)}$, and penalizing by Tikhonov regularization or by the Lasso, people are implicitly using a regularization term that depends on the data or design matrix $\mathbf{X}$. In fact, there is an equivalence between normalizing the predictors and not normalizing them, using the two following reweighted $\ell_2$ and $\ell_1$-norms instead of the Tikhonov regularization and the Lasso:
\begin{equation}
\label{lth}
\| \mathbf{w} \|_{2}^2 = \sum_{i=1}^p \| \mathbf{X}^{(i)} \|_2^2 \ w_i^2
\hspace{10mm} \text{and} \hspace{10mm}
\| \mathbf{w} \|_{1} = \sum_{i=1}^p \| \mathbf{X}^{(i)} \|_2 \ | w_i |.
\end{equation}
These two norms can be expressed using the matrix $\mathbf{X} \Diag(\mathbf{w})$:
\begin{equation*}
\| \mathbf{w} \|_{2} = \| \mathbf{X} \Diag (\mathbf{w}) \|_{F}
\hspace{10mm} \text{and} \hspace{10mm}
\| \mathbf{w} \|_{1} = \| \mathbf{X} \Diag (\mathbf{w}) \|_{2,1},
\end{equation*}
and a natural question arises: are there other relevant choices of functions or matrix norms? A classical measure of the complexity of a model is the number of predictors used by this model, which is equal to the size of the support of $\mathbf{w}$. This penalty being non-convex, people use its convex relaxation, which is the $\ell_1$-norm, leading to the Lasso.

Here, we propose a different measure of complexity which can be shown to be more adapted in model selection settings~\cite{hastie2001}: the dimension of the subspace spanned by the selected predictors. This is equal to the rank of the selected predictors, or also to the rank of the matrix $\mathbf{X} \Diag(\mathbf{w})$. As for the size of the support, this function is non-convex, and we propose to replace it by a convex surrogate, the \emph{trace norm}, leading to the following penalty that we call ``trace Lasso'':
\begin{equation*}
\Omega(\mathbf{w}) = \| \mathbf{X} \Diag(\mathbf{w}) \|_*.
\end{equation*}
The trace Lasso has some interesting properties: if all the predictors are orthogonal, then, it is equal to the $\ell_1$-norm. Indeed, we have the decomposition:
\begin{equation*}
\mathbf{X} \Diag(\mathbf{w}) = \sum_{i=1}^p \left( \| \mathbf{X}^{(i)} \|_2 w_i \right) \frac{\mathbf{X}^{(i)}}{\| \mathbf{X}^{(i)}\|_2} \mathbf{e}_i^{\top},
\end{equation*}
where $\mathbf{e}_i$ are the vectors of the canonical basis. Since the predictors are orthogonal and the $\mathbf{e}_i$ are orthogonal too, this gives the singular value decomposition of $\mathbf{X} \Diag(\mathbf{w})$ and we get
\begin{equation*}
\| \mathbf{X} \Diag(\mathbf{w}) \|_* = \sum_{i=1}^p \| \mathbf{X}^{(i)} \|_2 | w_i |  = \| \mathbf{X} \Diag(\mathbf{w}) \|_{2,1}.
\end{equation*}
On the other hand, if all the predictors are equal to $\mathbf{X}^{(1)}$, then
\begin{equation*}
\mathbf{X} \Diag(\mathbf{w}) = \mathbf{X}^{(1)} \mathbf{w}^{\top},
\end{equation*}
and we get $\| \mathbf{X} \Diag(\mathbf{w}) \|_* = \| \mathbf{X}^{(1)} \|_2 \| \mathbf{w}\|_2 = \|\mathbf{X} \Diag (\mathbf{w}) \|_F$, which is equivalent to the Tikhonov regularization. Thus when two predictors are strongly correlated, our norm will behave like the Tikhonov regularization, while for almost uncorrelated predictors, it will behave like the Lasso. 

Always having a unique minimum is an important property for a statistical estimator, as it is a first step towards stability. The trace Lasso, by adding strong convexity exactly in the direction of highly correlated covariates, always has a unique minimum, and is much more stable than the Lasso. 
\begin{proposition}
\label{unicity}
If the loss function $\ell$ is strongly convex with respect to its second argument, then the solution of the empirical risk minimization penalized by the trace Lasso, i.e., Eq.~(\ref{erm}), is unique.
\end{proposition}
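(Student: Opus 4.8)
The plan is to show that the set of minimizers of the objective $F(\mathbf{w}) = \frac{1}{n}\sum_i \ell(y_i, \mathbf{w}^{\top}\mathbf{x}_i) + \lambda\,\Omega(\mathbf{w})$, with $\Omega(\mathbf{w}) = \|\mathbf{X}\Diag(\mathbf{w})\|_*$, is a singleton. Since $\ell$ is convex in its second argument and $\Omega$ is a norm (the trace norm composed with the linear map $\mathbf{w}\mapsto\mathbf{X}\Diag(\mathbf{w})$), the function $F$ is convex, so its minimizers form a convex set. Suppose for contradiction that $\mathbf{w}_1\neq\mathbf{w}_2$ both minimize $F$; then the whole segment $\mathbf{w}_t = (1-t)\mathbf{w}_1 + t\mathbf{w}_2$, $t\in[0,1]$, consists of minimizers, and $F(\mathbf{w}_t)$ equals the optimal value $F^\star$ for every $t$.

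First I would use strong convexity of the loss to pin down the fitted values. Writing $L(\mathbf{w}) = \frac{1}{n}\sum_i \ell(y_i, (\mathbf{X}\mathbf{w})_i) = g(\mathbf{X}\mathbf{w})$, the map $g$ is strongly convex because each $z\mapsto\ell(y_i,z)$ is. Comparing $F$ at the midpoint $\tfrac12(\mathbf{w}_1+\mathbf{w}_2)$ with $\tfrac12 F(\mathbf{w}_1)+\tfrac12 F(\mathbf{w}_2)$, and combining convexity of $\Omega$ with the strong-convexity inequality for $g$, forces $\|\mathbf{X}\mathbf{w}_1 - \mathbf{X}\mathbf{w}_2\|_2^2 \le 0$, hence $\mathbf{X}\mathbf{w}_1 = \mathbf{X}\mathbf{w}_2$ and $\mathbf{h} := \mathbf{w}_2 - \mathbf{w}_1 \in \ker\mathbf{X}$. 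Consequently $L$ is constant on the segment, and since $F = L + \lambda\Omega$ is also constant there, $\Omega(\mathbf{w}_t)$ is constant in $t$. Here I would also record that the columns of $\mathbf{X}$ must be assumed nonzero, as guaranteed by the normalization of the predictors; otherwise $F$ trivially ignores the corresponding coordinate and uniqueness genuinely fails.

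It remains to derive a contradiction from the existence of a nonzero $\mathbf{h}\in\ker\mathbf{X}$ along which $\Omega$ is constant, and this is the step I expect to be the main obstacle. The difficulty is that $\Omega$, being a norm, is never strictly convex, and the unit ball of the trace norm has flat faces, so constancy along a segment is a priori possible; the argument must genuinely exploit the constraint $\sum_i h_i \mathbf{X}^{(i)} = \mathbf{X}\mathbf{h} = 0$, since without it $\Omega$ can indeed be constant along a direction. Writing $\mathbf{A} = \mathbf{X}\Diag(\mathbf{w}_1)$ and $\mathbf{B} = \mathbf{X}\Diag(\mathbf{h})$, the key structural fact I would use is that the $i$-th columns of both $\mathbf{A}$ and $\mathbf{B}$ lie in $\mathrm{span}(\mathbf{X}^{(i)})$, so every $\mathbf{M}(t) = \mathbf{A} + t\mathbf{B}$ has a rigidly constrained column geometry. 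Constancy of $t\mapsto\|\mathbf{M}(t)\|_*$ on an interval means that at an interior point both one-sided directional derivatives of the trace norm in the direction $\mathbf{B}$ vanish; using the directional-derivative formula $\langle \mathbf{U}_r\mathbf{V}_r^{\top}, \mathbf{B}\rangle \pm \|\mathbf{U}_\perp^{\top}\mathbf{B}\mathbf{V}_\perp\|_*$, for the thin SVD $\mathbf{M} = \mathbf{U}_r\mathbf{S}\mathbf{V}_r^{\top}$ with orthogonal complements $\mathbf{U}_\perp,\mathbf{V}_\perp$, this yields $\mathbf{U}_\perp^{\top}\mathbf{B}\mathbf{V}_\perp = 0$ and $\langle\mathbf{U}_r\mathbf{V}_r^{\top},\mathbf{B}\rangle = 0$. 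I would then combine these optimality relations with the per-column span constraint and with $\mathbf{X}\mathbf{h}=0$ to conclude $\mathbf{B} = \mathbf{X}\Diag(\mathbf{h}) = 0$; since the columns of $\mathbf{X}$ are nonzero this gives $\mathbf{h}=0$, contradicting $\mathbf{w}_1\neq\mathbf{w}_2$. Equivalently, one can phrase this final part as showing that the convex function $t\mapsto\|\mathbf{X}\Diag(\mathbf{w}_1 + t\mathbf{h})\|_*$ attains its minimum at a single point, i.e.\ has no flat bottom, which is exactly where the trace-norm geometry and the relation $\mathbf{X}\mathbf{h}=0$ interact and which I regard as the technical heart of the argument.
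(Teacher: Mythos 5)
Your first half is correct and coincides with the paper's: strong convexity of the loss in its second argument forces $\mathbf{X}\mathbf{w}_1=\mathbf{X}\mathbf{w}_2$ for any two minimizers, so $\mathbf{h}=\mathbf{w}_2-\mathbf{w}_1$ lies in the nullspace of $\mathbf{X}$ and the penalty is constant along the segment of minimizers. The problem is the second half, which you yourself call the technical heart: it is not carried out, and the ingredients you propose are provably insufficient to carry it out. From constancy you extract only first-order information at one interior point, namely $\mathbf{U}_\perp^{\top}\mathbf{B}\mathbf{V}_\perp=0$ and $\langle\mathbf{U}_r\mathbf{V}_r^{\top},\mathbf{B}\rangle=0$, and you claim these, together with the column-span structure and $\mathbf{X}\mathbf{h}=0$, yield $\mathbf{B}=\mathbf{X}\Diag(\mathbf{h})=0$. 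That implication is false. Take $n=2$, $p=3$, $\mathbf{X}=(\mathbf{e}_1,\mathbf{e}_1,\mathbf{e}_2)$ (unit-norm columns), $\mathbf{w}=(1,1,1)^{\top}$, $\mathbf{h}=(1,-1,0)^{\top}$. Then $\mathbf{X}\mathbf{h}=0$, the matrix $\mathbf{M}=\mathbf{X}\Diag(\mathbf{w})$ has full row rank (so $\mathbf{U}_\perp$ is empty and the first condition is vacuous), and a direct computation gives $\langle\mathbf{U}_r\mathbf{V}_r^{\top},\mathbf{X}\Diag(\mathbf{h})\rangle=\tfrac{1}{\sqrt 2}-\tfrac{1}{\sqrt 2}=0$: both of your conditions hold with $\mathbf{h}\neq 0$. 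Indeed here $\|\mathbf{X}\Diag(\mathbf{w}+t\mathbf{h})\|_*=1+\sqrt{2+2t^2}$, which is stationary at $t=0$ but not constant. So first-order stationarity at a point, even with all your structural constraints, cannot distinguish this configuration from the one you must exclude; any proof built only on those relations cannot conclude.

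What is actually needed (and what the paper does) is second-order information from the constancy. The paper first picks a base point $\mathbf{w}$ on the segment whose coordinates are nonzero on the support of $\delta=\mathbf{h}$, so that $\gamma_i=\delta_i/w_i$ is well defined — a step absent from your proposal. It then applies a second-order perturbation expansion of the trace norm (Prop.~\ref{prop_perturbation}): constancy along the segment forces the quadratic terms, which are sums of squares, to vanish, yielding $\mathbf{v}_i^{\top}\Diag(\gamma)\mathbf{v}_j=0$ whenever $s_i\neq s_j$. This means $\Diag(\gamma)$ stabilizes the eigenspaces of $\Diag(\mathbf{w})\mathbf{X}^{\top}\mathbf{X}\Diag(\mathbf{w})$, hence commutes with it, giving $\sigma_{ij}\gamma_i=\sigma_{ij}\gamma_j$ for $\sigma_{ij}=[\mathbf{X}^{\top}\mathbf{X}]_{ij}$; therefore $\gamma$ is constant on each connected component $S_k$ of the correlation graph of the support. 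Combining with $\mathbf{X}\delta=0$ and the orthogonality between components gives $\mathbf{X}\delta_{S_k}=0$ for every $k$, and since $\delta_{S_k}$ is collinear with $\mathbf{w}_{S_k}$, any nonzero $\delta_{S_k}$ would put $\mathbf{w}_{S_k}\neq 0$ in the nullspace of $\mathbf{X}$; replacing $\mathbf{w}_{S_k}$ by $0$ would then leave the data term unchanged but strictly decrease the norm, contradicting optimality. None of these steps (choice of base point, second-order expansion, commutation, partition into components, strict-decrease contradiction) appears in your proposal, so the gap is genuine, not merely a matter of detail.
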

\label{singleopt}
The technical proof of this proposition is given in appendix \ref{proof_prop_1}, and consists of showing that in the flat directions of the loss function, the trace Lasso is strongly convex. 

\subsection{A new family of penalty functions}

In this section, we introduce a new family of penalties, inspired by the trace Lasso, allowing us to write the $\ell_1$-norm, the $\ell_2$-norm and the newly introduced trace Lasso as special cases. In fact, we note that $\| \Diag(\mathbf{w}) \|_* = \| \mathbf{w} \|_1$ and $\| p^{-1/2} \mathbf{1}^{\top} \Diag(\mathbf{w}) \|_* = \| \mathbf{w}^{\top} \|_* = \| \mathbf{w} \|_2$. In other words, we can express the $\ell_1$ and $\ell_2$-norms of $\mathbf{w}$ using the trace norm of a given matrix times the matrix $\Diag (\mathbf{w})$. A natural question to ask is: what happens when using a matrix $\mathbf{P}$ other than the identity or the line vector $p^{-1/2}\mathbf{1}^{\top}$, and what are good choices of such matrices? Therefore, we introduce the following family of penalty functions:
\begin{definition}
Let $\mathbf{P} \in \mathbb{R}^{k \times p}$, all of its columns having unit norm. We introduce the norm $\Omega_{\mathbf{P}}$ as
\begin{equation*}
\Omega_{\mathbf{P}} (\mathbf{w}) = \| \mathbf{P} \Diag(\mathbf{w}) \|_*.
\end{equation*}
\end{definition}
\begin{proof}
The positive homogeneity and triangle inequality are direct consequences of the linearity of $\mathbf{w} \mapsto \mathbf{P} \Diag(\mathbf{w})$ and the fact that $\| \cdot \|_*$ is a norm. Since all the columns of $\mathbf{P}$ are not equal to zero, we have 
\begin{equation*}
\mathbf{P} \Diag(\mathbf{w}) = 0 \Leftrightarrow \mathbf{w} = 0,
\end{equation*}
and so, $\Omega_{\mathbf{P}}$ separates points and is a norm.
\end{proof}
As stated before, the $\ell_1$ and $\ell_2$-norms are special cases of the family of norms we just introduced. Another important penalty that can be expressed as a special case is the group Lasso, with non-overlapping groups. Given a partition $(S_j)$ of the set $\{1, ..., p \}$, the group Lasso is defined by
\begin{equation*}
\| \mathbf{w} \|_{GL}  = \sum_{S_j} \| \mathbf{w}_{S_j} \|_2.
\end{equation*}
We define the matrix $\mathbf{P}^{GL}$ by
\begin{equation*}
\mathbf{P}^{GL}_{ij} = \left\{
\begin{array}{ll}
1 / \sqrt{|S_k|}& \text{ if $i$ and $j$ are in the same group $S_k$,} \\
0 & \text{ otherwise.}
\end{array}
\right.
\end{equation*}
Then, 
\begin{equation}
\label{svd_gl}
\mathbf{P}^{GL} \Diag (\mathbf{w}) = \sum_{S_j} \frac{\mathbf{1}_{S_j}}{\sqrt{|S_j|}} \mathbf{w}_{S_j}^{\top} .
\end{equation}
Using the fact that $(S_j)$ is a partition of $\{1,...,p \}$, the vectors $\mathbf{1}_{S_j}$ are orthogonal and so are the vectors $\mathbf{w}_{S_j}$. Hence, after normalizing the vectors, Eq.~(\ref{svd_gl}) gives a singular value decomposition of $\mathbf{P}^{GL} \Diag(\mathbf{w})$ and so the group Lasso penalty can be expressed as a special case of our family of norms:
\begin{equation*}
\| \mathbf{P}^{GL} \Diag(\mathbf{w}) \|_* = \sum_{S_j} \| \mathbf{w}_{S_j}\|_2 = \| \mathbf{w} \|_{GL}.
\end{equation*}

In the following proposition, we show that our norm only depends on the value of $\mathbf{P}^{\top} \mathbf{P}$. This is an important property for the trace Lasso, where $\mathbf{P} = \mathbf{X}$, since it underlies the fact that this penalty only depends on the correlation matrix $\mathbf{X}^{\top} \mathbf{X}$ of the covariates.
\begin{proposition}
Let $\mathbf{P} \in \mathbb{R}^{k \times p}$, all of its columns having unit norm. We have
\begin{equation*}
\Omega_{\mathbf{P}}(\mathbf{w}) = \| (\mathbf{P}^{\top} \mathbf{P})^{1/2} \Diag(\mathbf{w}) \|_*.
\end{equation*}
\end{proposition}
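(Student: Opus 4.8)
The plan is to exploit the fact that the trace norm of any matrix $\mathbf{M}$ is a function of its Gram matrix $\mathbf{M}^{\top}\mathbf{M}$ alone. Indeed, the singular values of $\mathbf{M}$ are precisely the square roots of the eigenvalues of $\mathbf{M}^{\top}\mathbf{M}$, so that $\| \mathbf{M} \|_* = \tr\big((\mathbf{M}^{\top}\mathbf{M})^{1/2}\big)$. Consequently, to prove the proposition it suffices to show that the two matrices $\mathbf{P}\Diag(\mathbf{w})$ and $(\mathbf{P}^{\top}\mathbf{P})^{1/2}\Diag(\mathbf{w})$ have the same Gram matrix; the equality of their trace norms then follows immediately.

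First I would set $\mathbf{M} = \mathbf{P}\Diag(\mathbf{w})$ and compute its Gram matrix, using the symmetry of $\Diag(\mathbf{w})$, to obtain
\begin{equation*}
\mathbf{M}^{\top}\mathbf{M} = \Diag(\mathbf{w})\,\mathbf{P}^{\top}\mathbf{P}\,\Diag(\mathbf{w}).
\end{equation*}
Next I would set $\mathbf{N} = (\mathbf{P}^{\top}\mathbf{P})^{1/2}\Diag(\mathbf{w})$, where $(\mathbf{P}^{\top}\mathbf{P})^{1/2}$ denotes the unique symmetric positive semidefinite square root of the symmetric positive semidefinite matrix $\mathbf{P}^{\top}\mathbf{P}$. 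Then, using that this square root equals its own transpose and squares back to $\mathbf{P}^{\top}\mathbf{P}$, I would compute
\begin{equation*}
\mathbf{N}^{\top}\mathbf{N} = \Diag(\mathbf{w})\,(\mathbf{P}^{\top}\mathbf{P})^{1/2}(\mathbf{P}^{\top}\mathbf{P})^{1/2}\,\Diag(\mathbf{w}) = \Diag(\mathbf{w})\,\mathbf{P}^{\top}\mathbf{P}\,\Diag(\mathbf{w}).
\end{equation*}
Having established $\mathbf{M}^{\top}\mathbf{M} = \mathbf{N}^{\top}\mathbf{N}$, the spectral characterization of the trace norm recalled above yields $\| \mathbf{M} \|_* = \| \mathbf{N} \|_*$, which is exactly the claimed identity $\Omega_{\mathbf{P}}(\mathbf{w}) = \| (\mathbf{P}^{\top}\mathbf{P})^{1/2}\Diag(\mathbf{w}) \|_*$.

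There is essentially no serious obstacle here; the argument is a short matrix computation. The only points requiring a little care are to invoke the \emph{symmetric} positive semidefinite square root (so that it coincides with its transpose and squares exactly to $\mathbf{P}^{\top}\mathbf{P}$, which is well defined since $\mathbf{P}^{\top}\mathbf{P}$ is symmetric positive semidefinite), and to state clearly the fact that the trace norm depends only on the Gram matrix. Everything else reduces to the two one-line identities above.
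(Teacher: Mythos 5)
Your proof is correct. The paper actually states this proposition without any proof at all, and your argument — that the trace norm is a function of the Gram matrix alone, $\| \mathbf{M} \|_* = \tr \big( (\mathbf{M}^{\top}\mathbf{M})^{1/2} \big)$, combined with the observation that $\mathbf{P}\Diag(\mathbf{w})$ and $(\mathbf{P}^{\top}\mathbf{P})^{1/2}\Diag(\mathbf{w})$ share the Gram matrix $\Diag(\mathbf{w})\,\mathbf{P}^{\top}\mathbf{P}\,\Diag(\mathbf{w})$ — is precisely the standard argument the authors presumably had in mind, so it cleanly fills the gap they left.
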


\begin{figure}[t]
\centering
\includegraphics[width=0.32 \textwidth]{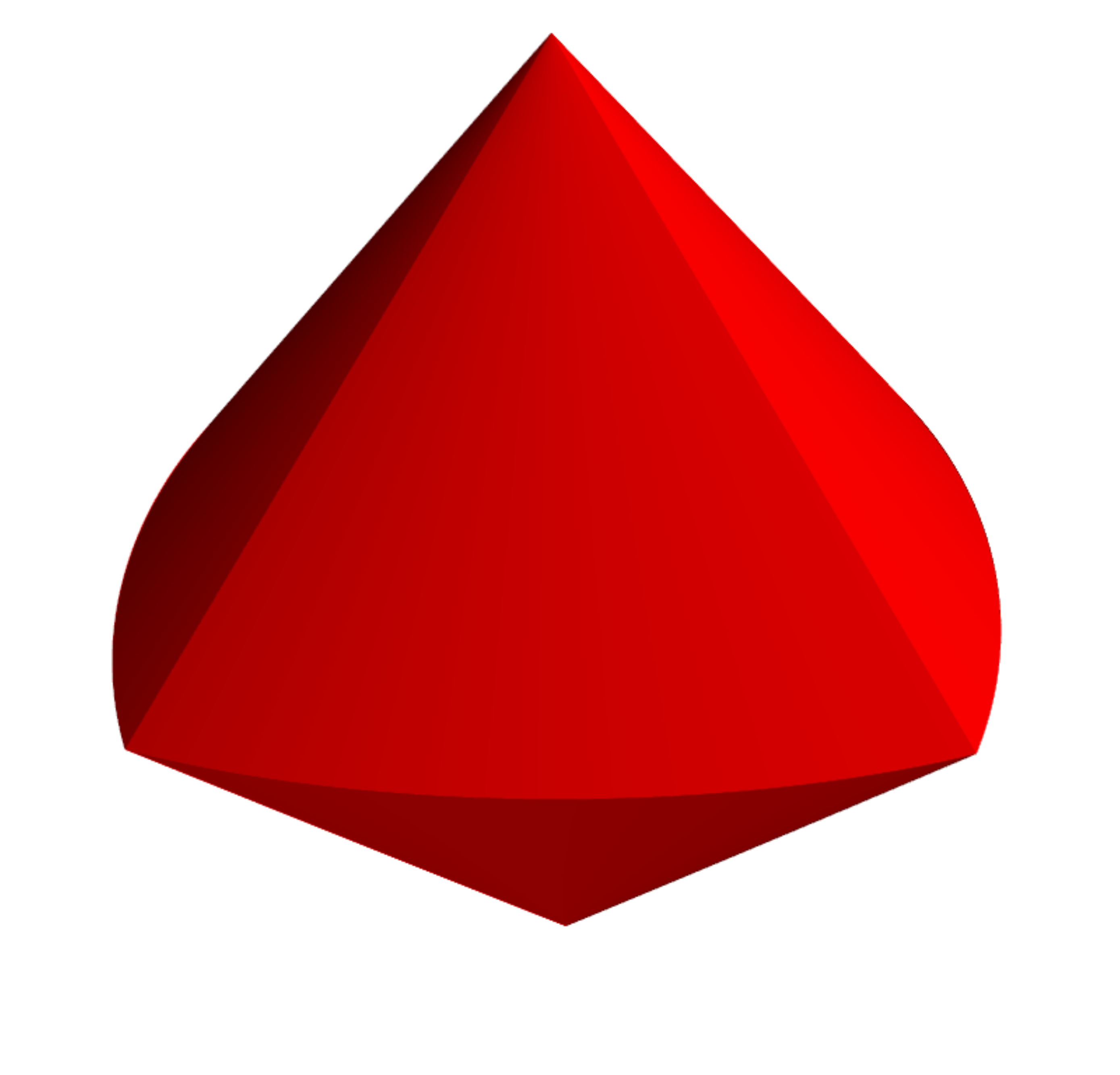}
\includegraphics[width=0.32 \textwidth]{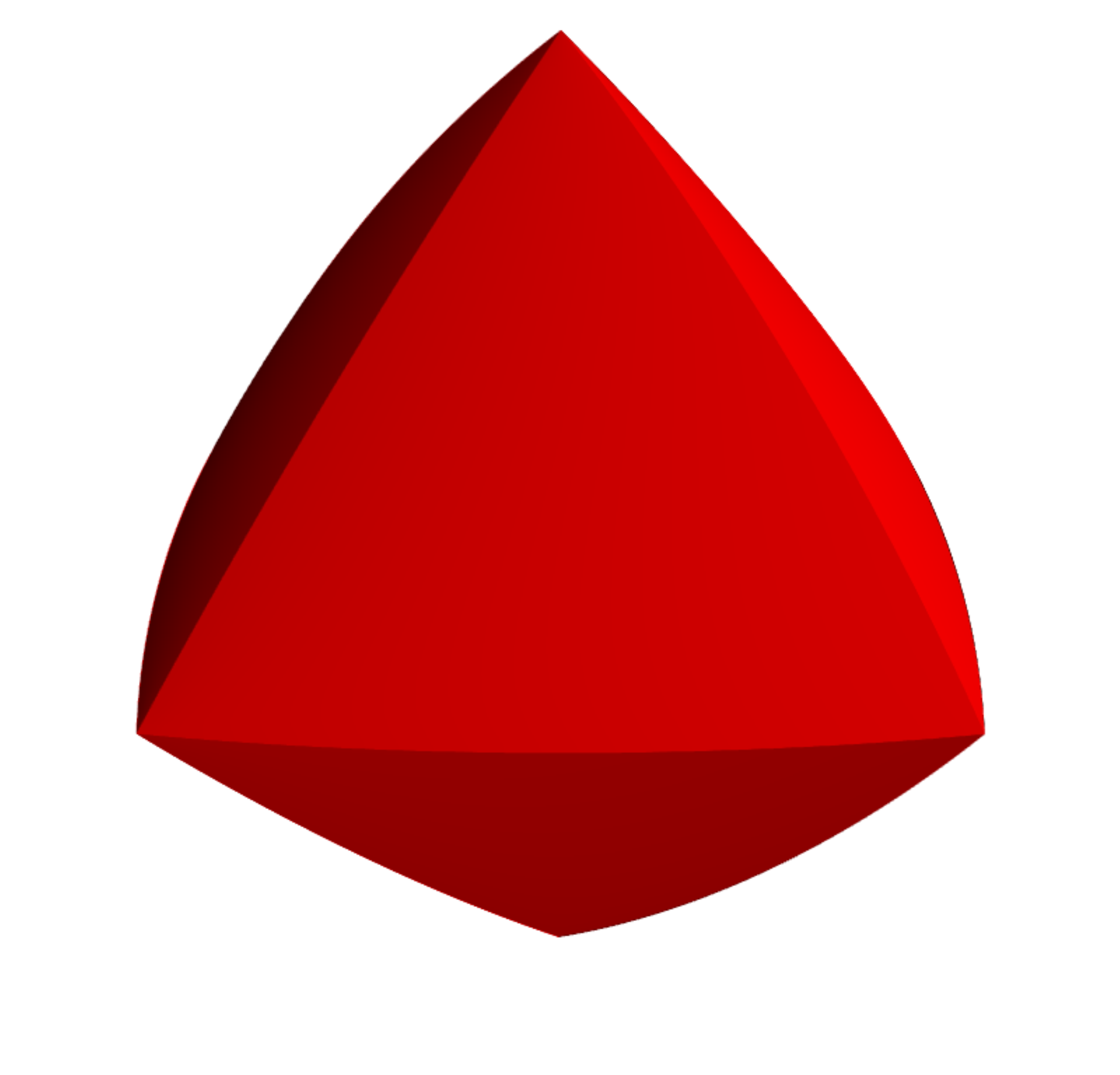}
\includegraphics[width=0.32 \textwidth]{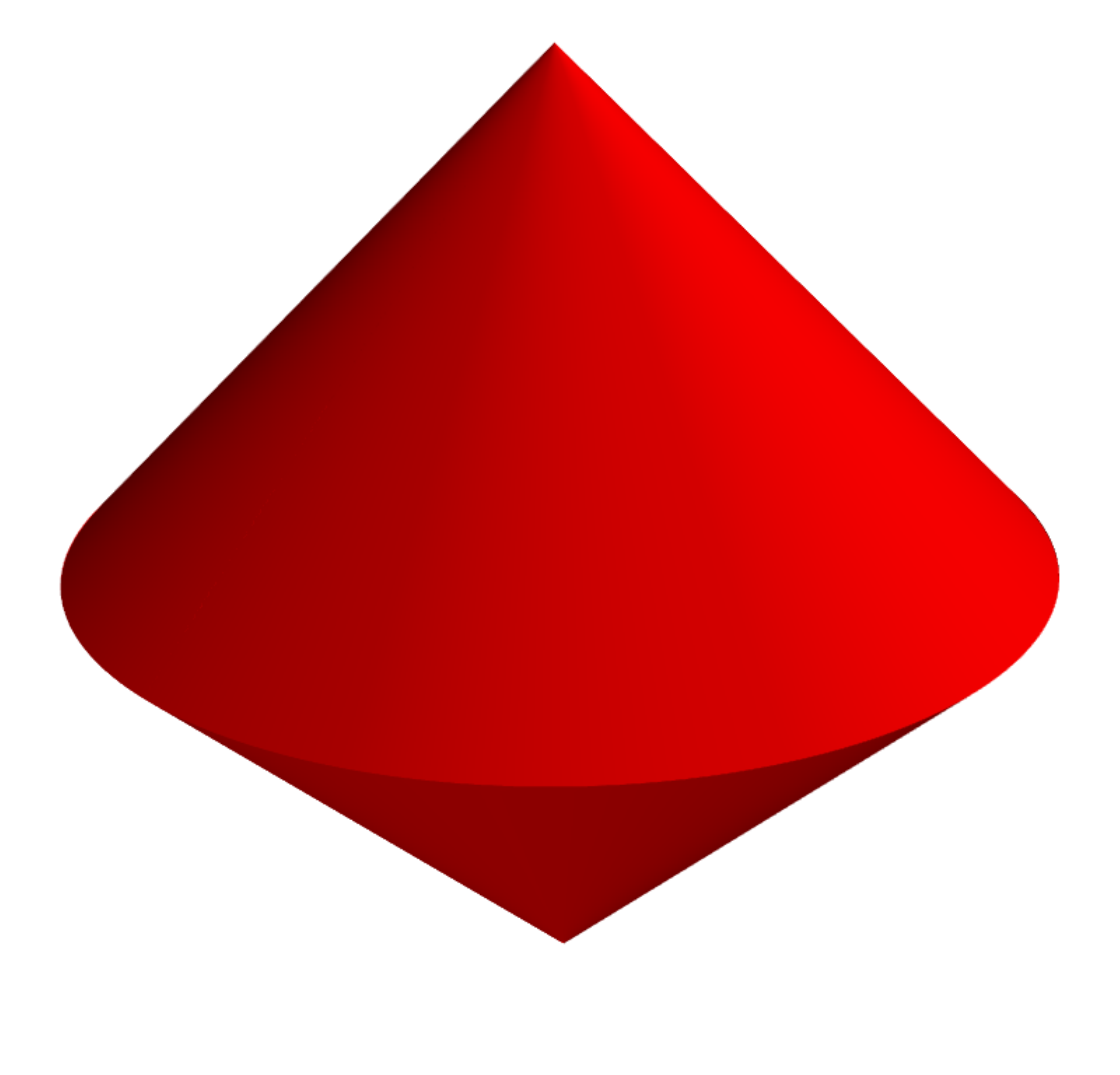}
\caption{Unit balls for various value of $\mathbf{P}^{\top}\mathbf{P}$. See the text for the value of $\mathbf{P}^{\top} \mathbf{P}$. (Best seen in color).}
\label{unit_balls}
\end{figure}

We plot the unit ball of our norm for the following value of $\mathbf{P}^{\top} \mathbf{P}$ (see figure (\ref{unit_balls})):
\begin{equation*}
\left(
\begin{array}{ccc}
1 & 0.9 & 0.1 \\
0.9 & 1 & 0.1 \\
0.1 & 0.1 & 1 
\end{array}
\right)
\hspace{10mm}
\left(
\begin{array}{ccc}
1 & 0.7 & 0.49 \\
0.7 & 1 & 0.7 \\
0.49 & 0.7 & 1 
\end{array}
\right)
\hspace{10mm}
\left(
\begin{array}{ccc}
1 & 1 & 0 \\
1 & 1 & 0 \\
0 & 0 & 1 
\end{array}
\right)
\end{equation*}

We can lower bound and upper bound our norms by the $\ell_2$-norm and $\ell_1$-norm respectively. This shows that, as for the elastic net, our norms interpolate between the $\ell_1$-norm and the $\ell_2$-norm. But the main difference between the elastic net and our norms is the fact that our norms are \emph{adaptive}, and require a single regularization parameter to tune. In particular for the trace Lasso, when two covariates are strongly correlated, it will be close to the $\ell_2$-norm, while when two covariates are almost uncorrelated, it will behave like the $\ell_1$-norm. This is a behavior close to the one of the pairwise elastic net~\cite{lorbert2010}.
\begin{proposition}
\label{bound21}
Let $\mathbf{P} \in \mathbb{R}^{k \times p}$, all of its columns having unit norm. We have
\begin{equation*}
\| \mathbf{w} \|_2 \leq \Omega_{\mathbf{P}} (\mathbf{w}) \leq \| \mathbf{w} \|_1.
\end{equation*}
\end{proposition}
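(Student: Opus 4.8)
The plan is to sandwich $\Omega_{\mathbf{P}}(\mathbf{w}) = \|\mathbf{P}\Diag(\mathbf{w})\|_*$ between two more tractable matrix norms of the \emph{same} matrix, using only the two elementary facts that, for any matrix $\mathbf{M}$, one has $\|\mathbf{M}\|_F \le \|\mathbf{M}\|_* \le \|\mathbf{M}\|_{2,1}$. Both are instances of the comparison between the $\ell_2$- and $\ell_1$-norms of a suitable vector: the left inequality applies it to the vector of singular values of $\mathbf{M}$ (since $\|\mathbf{M}\|_F$ is the $\ell_2$-norm and $\|\mathbf{M}\|_*$ the $\ell_1$-norm of the singular values), and the right inequality follows from the triangle inequality for $\|\cdot\|_*$ applied to the column decomposition of $\mathbf{M}$. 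It then remains only to evaluate the two flanking norms of $\mathbf{M} = \mathbf{P}\Diag(\mathbf{w})$, where the unit-norm assumption on the columns of $\mathbf{P}$ does all the work.

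For the lower bound, I would start from $\Omega_{\mathbf{P}}(\mathbf{w}) = \|\mathbf{P}\Diag(\mathbf{w})\|_* \ge \|\mathbf{P}\Diag(\mathbf{w})\|_F$. The $i$-th column of $\mathbf{P}\Diag(\mathbf{w})$ is $w_i \mathbf{P}^{(i)}$, so
\begin{equation*}
\|\mathbf{P}\Diag(\mathbf{w})\|_F^2 = \sum_{i=1}^p w_i^2\, \|\mathbf{P}^{(i)}\|_2^2 = \sum_{i=1}^p w_i^2 = \|\mathbf{w}\|_2^2,
\end{equation*}
the middle equality being exactly where $\|\mathbf{P}^{(i)}\|_2 = 1$ enters. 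This yields $\|\mathbf{w}\|_2 \le \Omega_{\mathbf{P}}(\mathbf{w})$.

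For the upper bound, I would write $\mathbf{P}\Diag(\mathbf{w}) = \sum_{i=1}^p w_i\, \mathbf{P}^{(i)} \mathbf{e}_i^{\top}$ as a sum of rank-one matrices, each of trace norm $|w_i|\,\|\mathbf{P}^{(i)}\|_2\,\|\mathbf{e}_i\|_2 = |w_i|$. The triangle inequality for $\|\cdot\|_*$ then gives
\begin{equation*}
\Omega_{\mathbf{P}}(\mathbf{w}) = \|\mathbf{P}\Diag(\mathbf{w})\|_* \le \sum_{i=1}^p |w_i|\,\|\mathbf{P}^{(i)}\|_2 = \sum_{i=1}^p |w_i| = \|\mathbf{w}\|_1,
\end{equation*}
again invoking the unit column norms at the last step; this is just $\|\mathbf{M}\|_* \le \|\mathbf{M}\|_{2,1}$ specialized to our matrix.

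There is no serious obstacle here: the only things to watch are to invoke the correct direction of each singular-value comparison and to remember that the unit-norm normalization of the columns of $\mathbf{P}$ is precisely what turns the Frobenius norm into $\|\mathbf{w}\|_2$ and the column-wise bound into $\|\mathbf{w}\|_1$. As a consistency check, the two inequalities are tight exactly in the extreme regimes discussed just before the statement: the Frobenius bound is attained when all predictors coincide (a rank-one matrix, whose $\ell_2$- and $\ell_1$-singular-value norms agree), and the $\ell_1$ bound is attained when the columns of $\mathbf{P}$ are orthogonal (where the displayed decomposition is already a singular value decomposition), matching the claimed interpolation between the $\ell_2$- and $\ell_1$-norms.
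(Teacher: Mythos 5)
Your proof is correct, and half of it is the paper's own argument: for the lower bound, both you and the paper compute $\|\mathbf{P}\Diag(\mathbf{w})\|_F = \|\mathbf{w}\|_2$ from the unit-column assumption and then invoke $\|\cdot\|_F \le \|\cdot\|_*$. For the upper bound, however, you take a genuinely different, primal route: you decompose $\mathbf{P}\Diag(\mathbf{w}) = \sum_{i=1}^p w_i\,\mathbf{P}^{(i)}\mathbf{e}_i^{\top}$ into rank-one terms, each of trace norm $|w_i|\,\|\mathbf{P}^{(i)}\|_2\,\|\mathbf{e}_i\|_2 = |w_i|$, and conclude by the triangle inequality, i.e., the comparison $\|\mathbf{M}\|_* \le \|\mathbf{M}\|_{2,1}$. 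The paper instead argues by duality: it writes $\|\mathbf{P}\Diag(\mathbf{w})\|_* = \max_{\|\mathbf{M}\|_{op}\le 1} \tr\left(\mathbf{M}^{\top}\mathbf{P}\Diag(\mathbf{w})\right)$, identifies this with $\diag(\mathbf{M}^{\top}\mathbf{P})^{\top}\mathbf{w}$, and bounds each entry $|\mathbf{M}^{(i)\top}\mathbf{P}^{(i)}| \le 1$ using the fact that a matrix of operator norm at most one has columns of $\ell_2$-norm at most one. Your version is more elementary (nothing beyond the trace norm of a rank-one matrix and the triangle inequality) and reuses exactly the column decomposition the paper employs earlier to show that the trace Lasso reduces to the $\ell_1$-norm for orthogonal designs, which also makes the equality cases you mention transparent. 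The paper's dual argument buys consistency with the machinery it develops right after the proposition: the same dual pairing between the trace norm and the operator norm underlies its bound $\Omega^*_{\mathbf{P}}(\mathbf{u}) \le \|\mathbf{P}\Diag(\mathbf{u})\|_{op}$, so proving the upper bound this way sets up the dual-norm section at no extra cost. Both proofs are complete and correct; the difference is purely one of route.
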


\subsection{Dual norm}
The dual norm is an important quantity for both optimization and theoretical analysis of the estimator. Unfortunately, we are not able in general to obtain a closed form expression of the dual norm for the family of norms we just introduced. However we can obtain a bound, which is exact for some special cases:

\begin{proposition}
The dual norm, defined by
$
\displaystyle{\Omega_{\mathbf{P}}^* (\mathbf{u}) = \max_{\Omega_{\mathbf{P}}(\mathbf{v}) \leq 1} \mathbf{u}^{\top} \mathbf{v}}
$,
can be bounded by:
\begin{equation*}
\Omega^*_{\mathbf{P}} (\mathbf{u}) \leq \| \mathbf{P} \Diag (\mathbf{u}) \|_{op}.
\end{equation*}
\end{proposition}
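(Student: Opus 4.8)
The plan is to relate the bilinear form $\mathbf{u}^{\top}\mathbf{v}$ that appears in the definition of the dual norm to the Frobenius inner product of the two matrices $\mathbf{P}\Diag(\mathbf{u})$ and $\mathbf{P}\Diag(\mathbf{v})$, and then to invoke the duality between the trace norm and the operator norm.

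First I would exploit the unit-norm assumption on the columns of $\mathbf{P}$, which makes the diagonal of the Gram matrix $\mathbf{P}^{\top}\mathbf{P}$ equal to $\mathbf{1}$. Writing $\mathbf{A} = \mathbf{P}\Diag(\mathbf{u})$ and $\mathbf{B} = \mathbf{P}\Diag(\mathbf{v})$, this gives
\[
\tr(\mathbf{A}^{\top}\mathbf{B}) = \tr\bigl(\Diag(\mathbf{u})\,\mathbf{P}^{\top}\mathbf{P}\,\Diag(\mathbf{v})\bigr) = \sum_{i=1}^p u_i\,(\mathbf{P}^{\top}\mathbf{P})_{ii}\,v_i = \sum_{i=1}^p u_i v_i = \mathbf{u}^{\top}\mathbf{v},
\]
since each $(\mathbf{P}^{\top}\mathbf{P})_{ii} = \| \mathbf{P}^{(i)} \|_2^2 = 1$. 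This identity is the crux of the argument: it re-expresses the scalar product $\mathbf{u}^{\top}\mathbf{v}$ as a matrix inner product between $\mathbf{P}\Diag(\mathbf{u})$ and $\mathbf{P}\Diag(\mathbf{v})$.

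Next I would apply the standard duality inequality between the trace norm and the operator norm (Hölder's inequality for Schatten norms in the $S_1$--$S_\infty$ case), namely $\tr(\mathbf{A}^{\top}\mathbf{B}) \leq \| \mathbf{A} \|_{op}\,\| \mathbf{B} \|_*$. Combining this with the identity above and the definition $\Omega_{\mathbf{P}}(\mathbf{v}) = \| \mathbf{P}\Diag(\mathbf{v}) \|_*$ yields
\[
\mathbf{u}^{\top}\mathbf{v} \;\leq\; \| \mathbf{P}\Diag(\mathbf{u}) \|_{op}\;\Omega_{\mathbf{P}}(\mathbf{v}).
\]
Restricting to those $\mathbf{v}$ with $\Omega_{\mathbf{P}}(\mathbf{v}) \leq 1$ and taking the maximum over the feasible set then gives $\Omega^*_{\mathbf{P}}(\mathbf{u}) \leq \| \mathbf{P}\Diag(\mathbf{u}) \|_{op}$, as claimed.

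The computation is routine once the trace identity is in place, so I do not anticipate a serious obstacle; the only step needing care is the rewriting of $\mathbf{u}^{\top}\mathbf{v}$ as $\tr(\mathbf{A}^{\top}\mathbf{B})$, which relies essentially on the unit-norm normalization of the columns of $\mathbf{P}$. It is worth noting why the bound is only an inequality in general: equality in the Schatten--Hölder step would require $\mathbf{B}$ to be aligned with the top singular subspace of $\mathbf{A}$, but such an optimal $\mathbf{B}$ need not be of the constrained form $\mathbf{P}\Diag(\mathbf{v})$ for some vector $\mathbf{v}$; this is precisely what prevents tightness in general while still allowing equality in special cases.
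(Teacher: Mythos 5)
Your proof is correct and follows essentially the same route as the paper: rewriting $\mathbf{u}^{\top}\mathbf{v}$ as $\tr\left(\Diag(\mathbf{u})\,\mathbf{P}^{\top}\mathbf{P}\,\Diag(\mathbf{v})\right)$ via the unit-column normalization, then applying the trace-norm/operator-norm duality and maximizing over the unit ball of $\Omega_{\mathbf{P}}$. Your closing remark on why tightness can fail (the optimal dual matrix need not have the form $\mathbf{P}\Diag(\mathbf{v})$) is a nice addition but not needed for the claim.
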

\begin{proof}
Using the fact that $\diag( \mathbf{P}^{\top} \mathbf{P} ) = \mathbf{1}$, we have 
\begin{align*}
\mathbf{u}^{\top} \mathbf{v} & = \tr \left( \Diag(\mathbf{u}) \mathbf{P}^{\top} \mathbf{P} \Diag(\mathbf{v}) \right) \\
& \leq \| \mathbf{P} \Diag(\mathbf{u}) \|_{op} \| \mathbf{P} \Diag(\mathbf{v}) \|_*,
\end{align*}
where the inequality comes from the fact that the operator norm $\| \cdot \|_{op}$ is the dual norm of the trace norm. The definition of the dual norm then gives the result.
\end{proof}
As a corollary, we can bound the dual norm by a constant times the $\ell_{\infty}$-norm:
\begin{equation*}
\Omega_{\mathbf{P}}^* (\mathbf{u}) \leq \| \mathbf{P} \Diag(\mathbf{u}) \|_{op}
\leq \| \mathbf{P} \|_{op} \| \Diag(\mathbf{u}) \|_{op}
= \| \mathbf{P} \|_{op} \| \mathbf{u} \|_{\infty}.
\end{equation*}
Using proposition (\ref{bound21}), we also have the inequality $\Omega_{\mathbf{P}}^* (\mathbf{u}) \geq \| \mathbf{u} \|_{\infty}$.

\section{Optimization algorithm}
\label{opt}
In this section, we introduce an algorithm to estimate the parameter vector $\mathbf{w}$ when the loss function is equal to the square loss: $\ell(y, \mathbf{w}^{\top}\mathbf{x}) = \frac{1}{2}(y - \mathbf{w}^{\top} \mathbf{x})^2$ and the penalty is the trace Lasso. It is straightforward to extend this algorithm to the family of norms indexed by $\mathbf{P}$. The problem we consider is
\begin{equation*}
\min_{\mathbf{w}} \frac{1}{2} \| \mathbf{y} - \mathbf{X} \mathbf{w} \|_2^2 + \lambda \| \mathbf{X} \Diag(\mathbf{w}) \|_*.
\end{equation*}
We could optimize this cost function by subgradient descent, but this is quite inefficient: computing the subgradient of the trace Lasso is expensive and the rate of convergence of subgradient descent is quite slow. Instead, we consider an iteratively reweighted least-squares method. First, we need to introduce a well-known variational formulation for the trace norm~\cite{argyriou2007}:
\begin{proposition}
Let $\mathbf{M} \in \mathbb{R}^{n \times p}$. The trace norm of $\mathbf{M}$ is equal to:
\begin{equation*}
\| \mathbf{M} \|_{*} = \frac{1}{2} \inf_{\mathbf{S} \succeq 0} \tr \left( \mathbf{M}^{\top} \mathbf{S}^{-1} \mathbf{M} \right) + \tr \left( \mathbf{S} \right),
\end{equation*}
and the infimum is attained for $\mathbf{S} = \left(\mathbf{MM}^{\top} \right)^{1/2}$.
\label{eta-trick}
\end{proposition}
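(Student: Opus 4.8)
The plan is to recognize this as a convex minimization over the positive definite cone and to match a clean pointwise lower bound with an explicit minimizer. First I would use cyclicity of the trace to rewrite $\tr(\mathbf{M}^{\top}\mathbf{S}^{-1}\mathbf{M}) = \tr(\mathbf{S}^{-1}\mathbf{A})$ with $\mathbf{A} := \mathbf{M}\mathbf{M}^{\top} \succeq 0$, and recall that $\| \mathbf{M} \|_* = \tr(\mathbf{A}^{1/2})$, since the nonzero singular values of $\mathbf{M}$ are exactly the square roots of the nonzero eigenvalues of $\mathbf{A}$. The goal then becomes to show $\inf_{\mathbf{S} \succ 0} \tfrac12 \tr(\mathbf{S}^{-1}\mathbf{A}) + \tfrac12 \tr(\mathbf{S}) = \tr(\mathbf{A}^{1/2})$, with the value attained at $\mathbf{S} = \mathbf{A}^{1/2}$.

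The key step is a completing-the-square argument that simultaneously yields the bound and the equality case. Writing $\mathbf{B} = \mathbf{A}^{1/2}$ and, for any $\mathbf{S} \succ 0$, setting $\mathbf{C} = \mathbf{S}^{1/2} - \mathbf{S}^{-1/2}\mathbf{B}$, I would expand $\tr(\mathbf{C}^{\top}\mathbf{C}) \geq 0$ using the symmetry of $\mathbf{S}^{1/2}$, $\mathbf{S}^{-1/2}$, $\mathbf{B}$ and cyclicity to obtain
\[ \tr(\mathbf{C}^{\top}\mathbf{C}) = \tr(\mathbf{S}) - 2\tr(\mathbf{B}) + \tr(\mathbf{S}^{-1}\mathbf{B}^2) = \tr(\mathbf{S}) + \tr(\mathbf{S}^{-1}\mathbf{A}) - 2\tr(\mathbf{A}^{1/2}). \]
Since a Gram matrix has nonnegative trace, this gives the pointwise lower bound $\tfrac12 \tr(\mathbf{S}^{-1}\mathbf{A}) + \tfrac12 \tr(\mathbf{S}) \geq \tr(\mathbf{A}^{1/2}) = \| \mathbf{M} \|_*$ for every $\mathbf{S} \succ 0$, with equality iff $\mathbf{C} = 0$, i.e. $\mathbf{S} = \mathbf{A}^{1/2}$. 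I would then substitute $\mathbf{S} = \mathbf{A}^{1/2}$ to verify it attains the value $\tfrac12 \tr(\mathbf{A}^{-1/2}\mathbf{A}) + \tfrac12 \tr(\mathbf{A}^{1/2}) = \tr(\mathbf{A}^{1/2})$ whenever $\mathbf{A} \succ 0$.

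The main obstacle is the rank-deficient case, where $\mathbf{A} = \mathbf{M}\mathbf{M}^{\top}$ is singular: then $\mathbf{A}^{1/2}$ is not invertible, $\mathbf{S}^{-1}$ must be read as the Moore--Penrose pseudoinverse, and $\tr(\mathbf{M}^{\top}\mathbf{S}^{-1}\mathbf{M})$ is finite only when $\mathrm{range}(\mathbf{M}) \subseteq \mathrm{range}(\mathbf{S})$. I would handle this by a limiting argument: apply the pointwise bound to $\mathbf{S}_{\varepsilon} = \mathbf{A}^{1/2} + \varepsilon \mathbf{I} \succ 0$, which commutes with $\mathbf{A}$, so in the eigenbasis of $\mathbf{A}$ (eigenvalues $a_i$) the objective equals $\tfrac12 \sum_i \tfrac{a_i}{\sqrt{a_i}+\varepsilon} + \tfrac12 \sum_i (\sqrt{a_i}+\varepsilon)$; letting $\varepsilon \to 0$ drives this to $\sum_{a_i > 0} \sqrt{a_i} = \tr(\mathbf{A}^{1/2})$. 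This shows the infimum equals $\| \mathbf{M} \|_*$ in all cases and is attained at $\mathbf{S} = (\mathbf{M}\mathbf{M}^{\top})^{1/2}$, since $\mathrm{range}(\mathbf{M}) = \mathrm{range}(\mathbf{A}) = \mathrm{range}(\mathbf{A}^{1/2})$ makes $\tr(\mathbf{M}^{\top}\mathbf{S}^{+}\mathbf{M}) = \tr(\mathbf{A}^{1/2})$ well defined under the pseudoinverse convention. As an alternative to the Gram-matrix identity, one could invoke convexity of $\mathbf{S} \mapsto \tr(\mathbf{S}^{-1}\mathbf{A})$ on the positive definite cone and set the gradient $-\tfrac12 \mathbf{S}^{-1}\mathbf{A}\mathbf{S}^{-1} + \tfrac12 \mathbf{I}$ to zero, which again returns $\mathbf{S}^2 = \mathbf{A}$; but the completing-the-square route is more transparent and delivers the equality characterization for free.
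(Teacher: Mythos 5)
Your proof is correct. There is nothing in the paper to compare it against: the paper states this proposition as a known result with a citation to prior work (Argyriou et al.), and none of the appendices prove it, so your argument is a self-contained proof of a fact the paper imports. Your completing-the-square identity $\tr(\mathbf{C}^{\top}\mathbf{C}) = \tr(\mathbf{S}) + \tr(\mathbf{S}^{-1}\mathbf{A}) - 2\tr\left(\mathbf{A}^{1/2}\right) \geq 0$ with $\mathbf{C} = \mathbf{S}^{1/2} - \mathbf{S}^{-1/2}\mathbf{A}^{1/2}$ is a clean substitute for the route usually taken in the cited literature (Cauchy--Schwarz for the trace inner product, $\tr\left(\mathbf{A}^{1/2}\right) \leq \sqrt{\tr(\mathbf{S})\,\tr(\mathbf{S}^{-1}\mathbf{A})}$, followed by AM--GM), and it has the advantage of delivering the equality case $\mathbf{S} = \mathbf{A}^{1/2}$ for free rather than tracking when both inequalities are tight. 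Your handling of the rank-deficient case is also the right one: the statement's ``$\inf_{\mathbf{S} \succeq 0}$'' combined with ``$\mathbf{S}^{-1}$'' only makes sense under exactly the convention you adopt (pseudoinverse plus the value $+\infty$ when $\mathrm{range}(\mathbf{M}) \not\subseteq \mathrm{range}(\mathbf{S})$). One small refinement would close the argument completely: your pointwise lower bound is established only for $\mathbf{S} \succ 0$, so in principle a feasible \emph{singular} $\mathbf{S}$ other than $\mathbf{A}^{1/2}$ could dip below $\|\mathbf{M}\|_*$; this is ruled out by the same $\varepsilon$-device you already use, namely applying the bound to $\mathbf{S} + \varepsilon\mathbf{I}$ and letting $\varepsilon \to 0$, since the range condition guarantees $\tr\left(\mathbf{M}^{\top}(\mathbf{S}+\varepsilon\mathbf{I})^{-1}\mathbf{M}\right) \to \tr\left(\mathbf{M}^{\top}\mathbf{S}^{+}\mathbf{M}\right)$. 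With that one line added, the infimum over all of $\mathbf{S} \succeq 0$ equals $\|\mathbf{M}\|_*$ and is attained at $\left(\mathbf{M}\mathbf{M}^{\top}\right)^{1/2}$, exactly as claimed.
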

Using this proposition, we can reformulate the previous optimization problem as
\begin{equation*}
\min_{\mathbf{w}} \inf_{S \succeq 0} \frac{1}{2} \| \mathbf{y} - \mathbf{X} \mathbf{w} \|_2^2 + \frac{\lambda}{2}\mathbf{w}^{\top} \Diag \big( \diag (\mathbf{X}^{\top} \mathbf{S}^{-1} \mathbf{X}) \big) \mathbf{w} + \frac{\lambda}{2} \tr (\mathbf{S}).
\end{equation*}
This problem is jointly convex in $(\mathbf{w}, \ \mathbf{S})$~\cite{boyd2004}. In order to optimize this objective function by alternating the minimization over $\mathbf{w}$ and $\mathbf{S}$, we need to add a term $\frac{\lambda \mu_i}{2} \tr(\mathbf{S}^{-1})$. Otherwise, the infimum over $\mathbf{S}$ could be attained at a non invertible $\mathbf{S}$, leading to a non convergent algorithm. The infimum over $\mathbf{S}$ is then attained for $\mathbf{S} = \left( \mathbf{X} \Diag(\mathbf{w})^2 \mathbf{X}^{\top} + \mu_i \mathbf{I} \right)^{1/2}$.

Optimizing over $\mathbf{w}$ is a least-squares problem penalized by a reweighted $\ell_2$-norm equal to $\mathbf{w}^{\top} \mathbf{D} \mathbf{w}$, where $\mathbf{D} = \Diag \left( \diag(\mathbf{X}^{\top} \mathbf{S}^{-1} \mathbf{X}) \right)$. It is equivalent to solving the linear system
\begin{equation*}
(\mathbf{X}^{\top} \mathbf{X} + \lambda \mathbf{D}) \mathbf{w} = \mathbf{X}^{\top} \mathbf{y}.
\end{equation*} 
This can be done efficiently by using a conjugate gradient method. Since the cost of multiplying $(\mathbf{X}^{\top} \mathbf{X} + \lambda \mathbf{D})$ by a vector is $O(np)$, solving the system has a complexity of $O(knp)$, where $k \leq p$ is the number of iterations needed to converge. Using warm restarts, $k$ can be much smaller than $p$, since the linear system we are solving does not change a lot from an iteration to another. Below we summarize the algorithm:

\vspace{3mm}

\noindent
\begin{minipage}{\textwidth}
\begin{center}
\rule{\textwidth}{2pt}
\textsc{Iterative algorithm for estimating $\mathbf{w}$} \\
\rule[1mm]{\textwidth}{1pt}
\end{center}
\vspace{-2mm}
\textbf{Input:} the design matrix $\mathbf{X}$, the initial guess $\mathbf{w}^0$, number of iteration $N$, sequence $\mu_i$.

\textbf{For} $i = 1... N$:

\begin{itemize}
\item Compute the eigenvalue decomposition $\mathbf{U} \Diag(s_k) \mathbf{U}^{\top}$ of $\mathbf{X} \Diag(\mathbf{w}^{i-1})^2 \mathbf{X}^{\top}$.

\item Set $\mathbf{D} = \Diag( \diag (\mathbf{X}^{\top} \mathbf{S}^{-1} \mathbf{X}))$, where $\mathbf{S}^{-1} = \mathbf{U} \Diag(1 / \sqrt{s_k + \mu_i}) \mathbf{U}^{\top}$.

\item Set $\mathbf{w}^i$ by solving the system $(\mathbf{X}^{\top} \mathbf{X} + \lambda \mathbf{D}) \mathbf{w} = \mathbf{X}^{\top} \mathbf{y}$.
\end{itemize}
\rule{\textwidth}{1pt}
\end{minipage}

\vspace{3mm}

For the sequence $\mu_i$, we use a decreasing sequence converging to ten times the machine precision.

\subsection{Choice of $\lambda$}
We now give a method to choose the regularization path. In fact, we know that the vector $\mathbf{0}$ is solution if and only if $\lambda \geq \Omega^* (\mathbf{X}^{\top} \mathbf{y})$~\cite{bach2011}. Thus, we need to start the path at $\lambda = \Omega^* (\mathbf{X}^{\top} \mathbf{y})$, corresponding to the empty solution $\mathbf{0}$, and then decrease $\lambda$. Using the inequalities on the dual norm we obtained in the previous section, we get 
\begin{equation*}
\| \mathbf{X}^{\top} \mathbf{y} \|_{\infty} \leq \Omega^{*}(\mathbf{X}^{\top} \mathbf{y}) \leq \| \mathbf{X} \|_{op} \| \mathbf{X}^{\top} \mathbf{y} \|_{\infty}.
\end{equation*}
Therefore, starting the path at $\lambda = \|\mathbf{X} \|_{op} \| \mathbf{X}^{\top} \mathbf{y} \|_{\infty}$ is a good choice.

\section{Approximation around the Lasso}
\label{dev_lasso}
In this section, we compute the second order approximation of our norm around the special case corresponding to the Lasso. We recall that when $\mathbf{P} = \mathbf{I} \in \mathbb{R}^{p \times p}$, our norm is equal to the $\ell_1$-norm. We add a small perturbation $\Delta \in \mathbb{S}_p$ to the identity matrix, and using Prop.~\ref{prop_perturbation} of the appendix \ref{perturbation_tn}, we obtain the following second order approximation:
\begin{multline*}
\| (\mathbf{I} + \Delta) \Diag(\mathbf{w}) \|_* = \| \mathbf{w} \|_1 + \diag( \Delta)^{\top} | \mathbf{w} | + \\ \sum_{|w_i| > 0} \sum_{|w_j| > 0} \frac{(\Delta_{ji} |w_i| - \Delta_{ij} |w_j|)^2}{4(|w_i| + |w_j|)} + \sum_{|w_i| = 0} \sum_{|w_j| > 0} \frac{(\Delta_{ij} |w_j| )^2}{2 |w_j|} + o(\| \Delta \|^2).
\end{multline*}
We can rewrite this approximation as
\begin{equation*}
\| (\mathbf{I} + \Delta) \Diag(\mathbf{w}) \|_* = \| \mathbf{w} \|_1 + \diag( \Delta)^{\top} | \mathbf{w} | + \sum_{i, j} \frac{\Delta_{ij}^2(|w_i| - |w_j|)^2}{4(|w_i| + |w_j|)} + o(\| \Delta \|^2),
\end{equation*}
using a slight abuse of notation, considering that the last term is equal to $0$ when $w_i = w_j = 0$. The second order term is quite interesting: it shows that when two covariates are correlated, the effect of the trace Lasso is to shrink the corresponding coefficients toward each other. Another interesting remark is the fact that this term is very similar to pairwise elastic net penalties, which are of the form $| \mathbf{w} |^{\top} \mathbf{P} | \mathbf{w} |$, where $\mathbf{P}_{ij}$ is a decreasing function of $\Delta_{ij}$. 

\section{Experiments}
\label{exp}
In this section, we perform synthetic experiments to illustrate the behavior of the trace Lasso and other classical penalties when there are highly correlated covariates in the design matrix. For all experiments, we have $p = 1024$ covariates and $n = 256$ observations. The support $S$ of $\mathbf{w}$ is equal to $\{ 1, ..., k\}$, where $k$ is the size of the support. For $i$ in the support of $\mathbf{w}$, we have $w_i~=~2~(b_i~-~1/2)$, where each $b_i$ is independently drawn from a uniform distribution on $[0, 1]$. The observations $\mathbf{x}_i$ are drawn from a multivariate Gaussian with mean $\mathbf{0}$ and covariance matrix $\Sigma$. For the first experiment, $\Sigma$ is set to the identity, for the second experiment, $\Sigma$ is block diagonal with blocks equal to $0.2 \mathbf{I} + 0.8 \mathbf{11}^{\top}$
corresponding to clusters of eight variables, finally for the third experiment, we set $\Sigma_{ij} = 0.95^{|i - j|}$, corresponding to a Toeplitz design. For each method, we choose the best $\lambda$ for the estimation error, which is reported.

Overall all methods behave similarly in the noiseless and the noisy settings, hence we only report results for the noisy setting. In all three graphs of Figure~2, we observe behaviors that are typical of Lasso, ridge and elastic net: the Lasso performs very well on sparse models, but its performance is rather poor for denser models, almost as poor as the ridge regression. The elastic net offers the best of both worlds since its two parameters allow it to interpolate adaptively between the Lasso and the ridge. In experiment 1, since the variables are uncorrelated, there is no reason to couple their selection. This suggests that the Lasso should be the most appropriate convex regularization. The trace Lasso approaches the Lasso as $n$ goes to infinity, but the weak coupling induced by empirical correlations is sufficient to slightly decrease its performance compared to that of the Lasso. By contrast, in experiments 2 and 3, the trace Lasso outperforms other methods (including the pairwise elastic net) since variables that should be selected together are indeed correlated. As for the penalized elastic net, since it takes into account the correlations between variables it is not surprising that in experiment 2 and 3 it performs better than methods that do not. We do not have a compelling explanation for its superior performance in experiment 1.

\begin{figure}[!h]
\centering
\includegraphics[width=0.32 \textwidth]{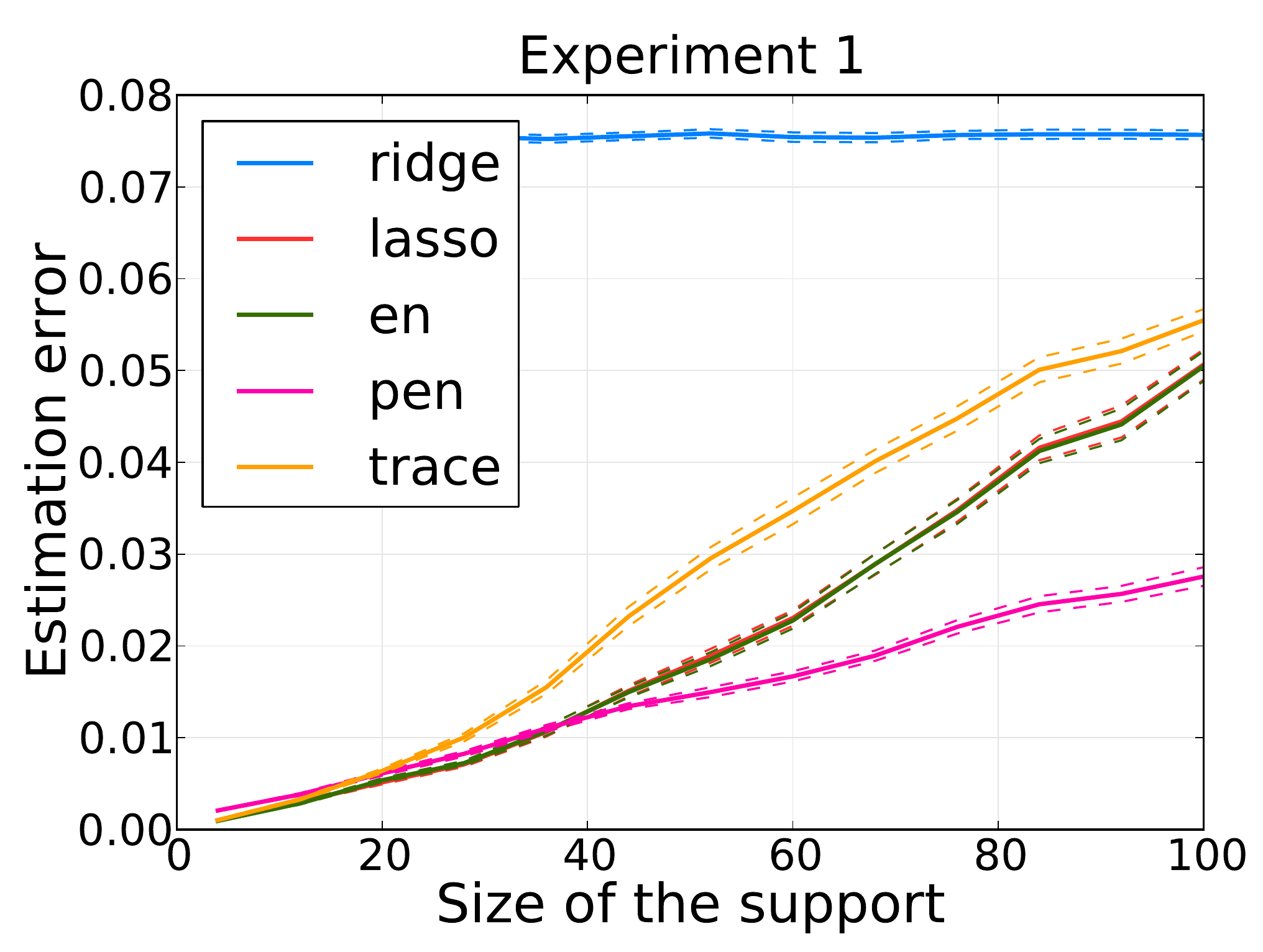} \hfill
\includegraphics[width=0.32 \textwidth]{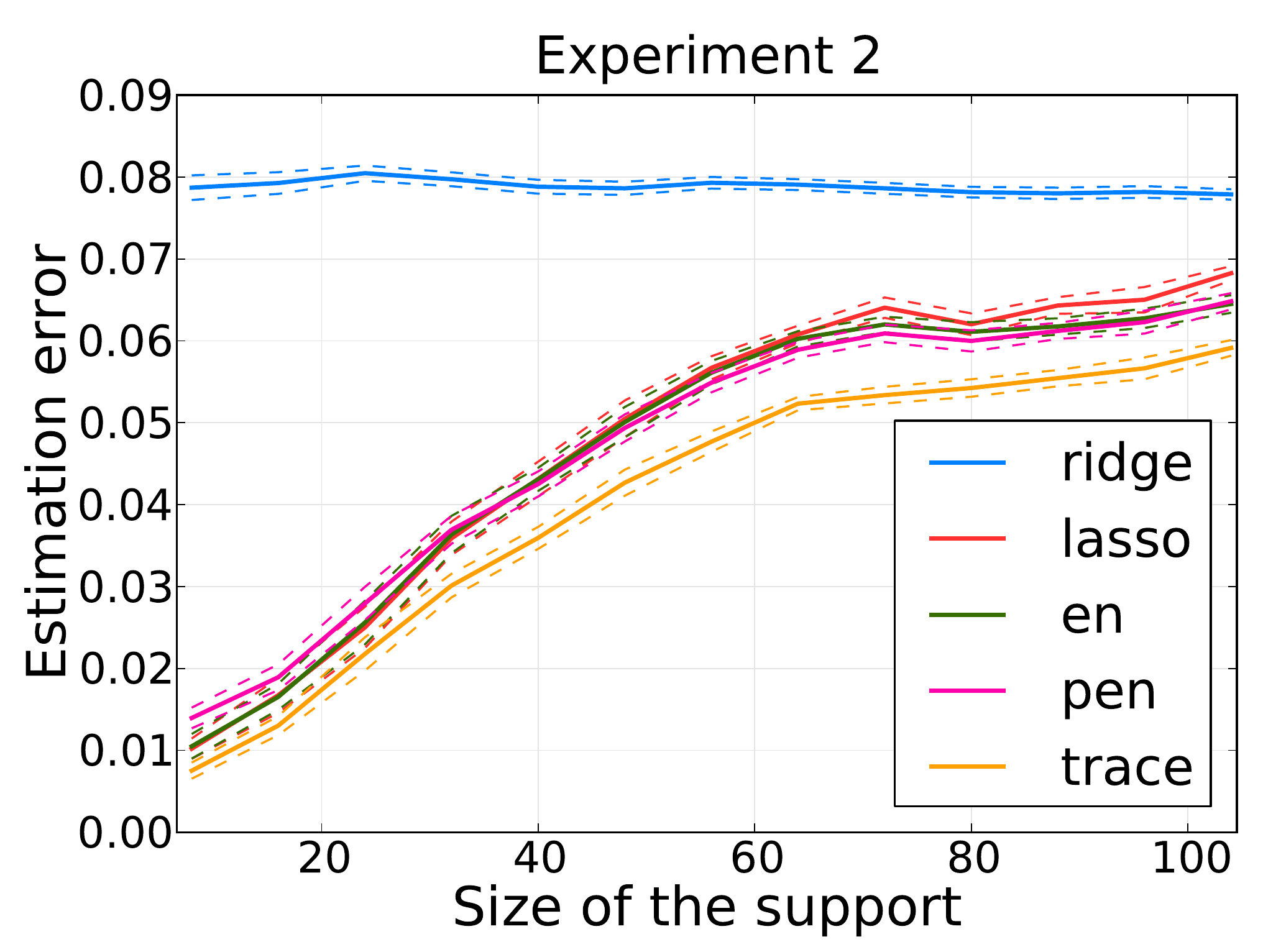} \hfill
\includegraphics[width=0.32 \textwidth]{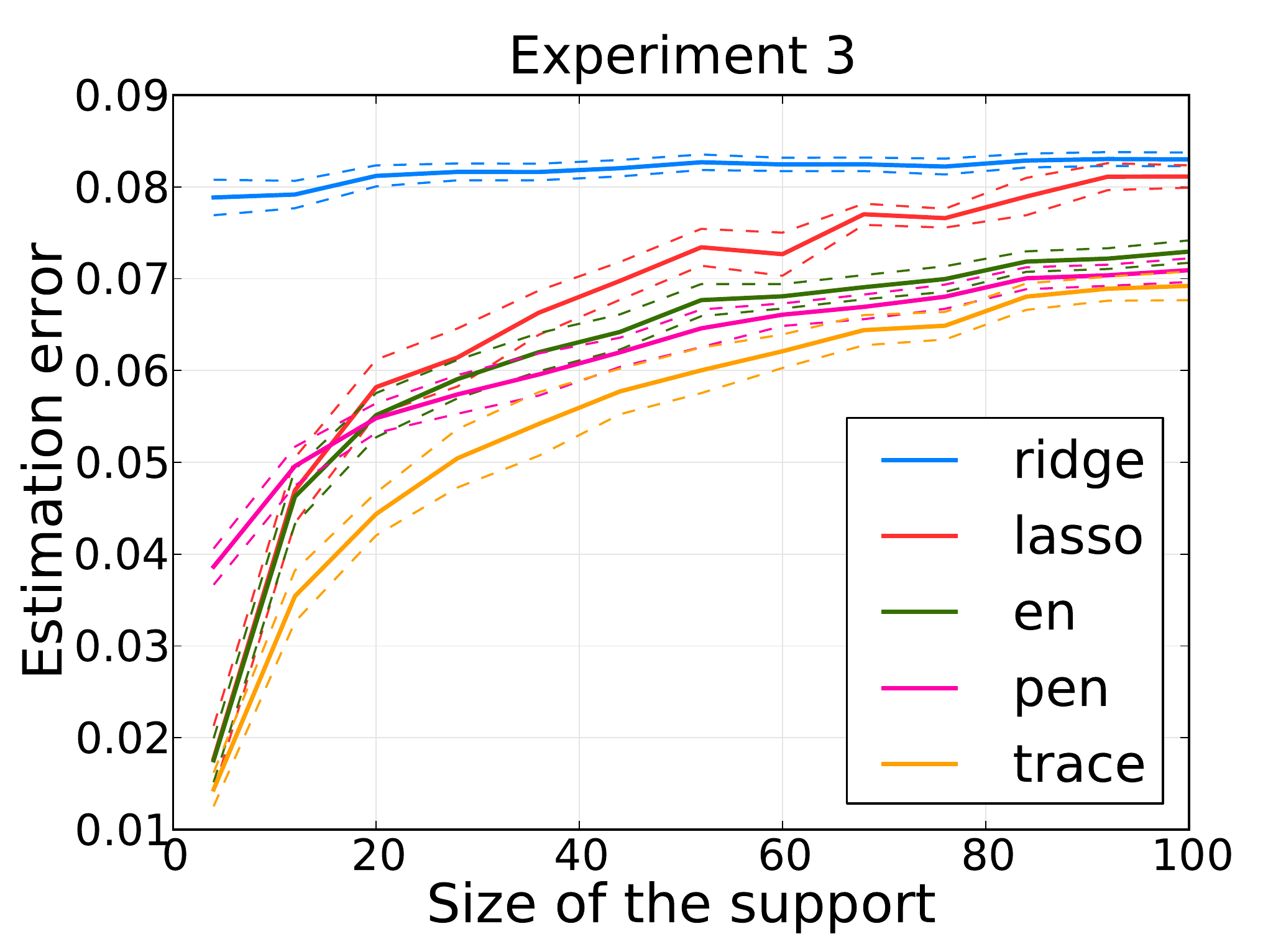}
\caption{Experiment for uncorrelated variables (Best seen in color. \texttt{en} stands for elastic net, \texttt{pen} stands for pairwise elastic net and \texttt{trace} stands for trace Lasso.)}
\end{figure}

\section{Conclusion}
We introduce a new penalty function, the trace Lasso, which takes advantage of the correlation between covariates to add strong convexity exactly in the directions where needed, unlike the elastic net for example, which blindly adds a squared $\ell_2$-norm term in every directions. We show on synthetic data that this adaptive behavior leads to better estimation performance. In the future, we want to show that if a dedicated norm using prior knowledge such as the group Lasso can be used, the trace Lasso will behave similarly and its performance will not degrade too much, providing theoretical guarantees to such adaptivity. Finally, we will seek applications of this estimator in inverse problems such as deblurring, where the design matrix exhibits strong correlation structure.


\section*{Acknowledgements}
This paper was partially supported by the European Research Council (SIERRA Project ERC-239993).

\bibliography{nipstl}{}
\bibliographystyle{unsrt}

\appendix
\newpage
\section{Perturbation of the trace norm}
\label{perturbation_tn}

We follow the technique used in~\cite{bach2008consistency} to obtain an approximation of the trace norm.

\subsection{Jordan-Wielandt matrices}
Let $\mathbf{M} \in \mathbb{R}^{n \times p}$ of rank $r$. We note $s_1 \geq s_2 \geq ... \geq s_r > 0$, the strictly positive singular values of $\mathbf{M}$ and $\mathbf{u}_i$, $\mathbf{v}_i$ the associated left and right singular vectors. We introduce the Jordan-Wielandt matrix
\begin{equation*}
\mathbf{\tilde M} = \left(
\begin{array}{cc}
\mathbf{0} & \mathbf{M} \\
\mathbf{M}^{\top} & \mathbf{0} 
\end{array}
\right) \in \mathbb{R}^{(n+p) \times (n+p)}.
\end{equation*}
The singular values of $\mathbf{M}$ and the eigenvalues of $\mathbf{\tilde M}$ are related:  $\mathbf{\tilde M}$ has eigenvalues $s_i$ and $s_{-i} = -s_i$ associated to eigenvectors 
\begin{equation*}
\mathbf{w}_i = \frac{1}{\sqrt{2}}
\left(
\begin{array}{c}
\mathbf{u}_i \\
\mathbf{v}_i
\end{array}
\right)
\hspace{5mm}
\text{ and }
\hspace{5mm}
\mathbf{w}_{-i} = \frac{1}{\sqrt{2}}
\left(
\begin{array}{c}
\mathbf{u}_i \\
- \mathbf{v}_i
\end{array}
\right).
\end{equation*}
The remaining eigenvalues of $\mathbf{\tilde M}$ are equal to $0$ and are associated to eigenvectors of the form
\begin{equation*}
\mathbf{w} = \frac{1}{\sqrt{2}}
\left(
\begin{array}{c}
\mathbf{u} \\
\mathbf{v}
\end{array}
\right)
\hspace{5mm} \text{and} \hspace{5mm}
\mathbf{w} = \frac{1}{\sqrt{2}}
\left(
\begin{array}{c}
\mathbf{u} \\
\mathbf{-v}
\end{array}
\right),
\end{equation*}
where $\forall \ i \in \{1, ..., r\}, \ \mathbf{u}^{\top}\mathbf{u}_i = \mathbf{v}^{\top}\mathbf{v}_i = 0$.

\subsection{Cauchy residue formula}
Let $\mathcal{C}$ be a closed curve that does not go through the eigenvalues of $\mathbf{\tilde M}$. We define 
\begin{equation*}
\Pi_{\mathcal{C}}(\mathbf{\tilde M}) = \frac{1}{2i \pi} \int_{\mathcal{C}} \lambda (\lambda \mathbf{I} - \mathbf{\tilde M})^{-1} d\lambda.
\end{equation*}
We have
\begin{align*}
\Pi_{\mathcal{C}}(\mathbf{\tilde M})  & = \frac{1}{2 i \pi} \oint \sum_{j} \frac{\lambda}{\lambda - s_j} \mathbf{w}_j \mathbf{w}_j^{\top} d\lambda \\
& = \frac{1}{2 i \pi} \oint \sum_j \left( 1 + \frac{s_j}{\lambda - s_j} \right) \mathbf{w}_j \mathbf{w}_j^{\top} d\lambda \\
& = \sum_{s_j \in \mathcal{C}} s_j \mathbf{w}_j \mathbf{w}_j^{\top}.
\end{align*}

\subsection{Perturbation analysis}
Let $\mathbf{\Delta} \in \mathbb{R}^{n \times p}$ be a perturbation matrix such that $\| \mathbf{\Delta} \|_{op} < s_r / 4$, and let $\mathcal{C}$ be a closed curve around the $r$ largest eigenvalues of $\mathbf{\tilde M}$ and $\mathbf{\tilde M} + \mathbf{\tilde \Delta}$. We can study the perturbation of the strictly positive singular values of $\mathbf{M}$ by computing the trace of $\Pi_{\mathcal{C}}(\mathbf{\tilde M} + \mathbf{\tilde \Delta}) - \Pi_{\mathcal{C}}(\mathbf{\tilde M})$. Using the fact that $(\lambda \mathbf{I} - \mathbf{\tilde M - \tilde \Delta})^{-1} = (\lambda \mathbf{I - \tilde M})^{-1} + (\lambda \mathbf{I - \tilde M})^{-1} \mathbf{\tilde \Delta} (\lambda \mathbf{I} - \mathbf{\tilde M - \tilde \Delta})^{-1}$, we have
\begin{multline*}
\Pi_{\mathcal{C}}(\mathbf{\tilde M} + \mathbf{\tilde \Delta}) - \Pi_{\mathcal{C}}(\mathbf{\tilde M}) = \frac{1}{2 i \pi} \oint \lambda(\lambda \mathbf{I - \tilde M})^{-1} \mathbf{\tilde \Delta} (\lambda \mathbf{I - \tilde M})^{-1}d\lambda \\
+ \frac{1}{2 i \pi} \oint \lambda(\lambda \mathbf{I - \tilde M})^{-1} \mathbf{\tilde \Delta} (\lambda \mathbf{I - \tilde M})^{-1} \mathbf{\tilde \Delta} (\lambda \mathbf{I - \tilde M})^{-1}d\lambda \\
+ \frac{1}{2 i \pi} \oint \lambda(\lambda \mathbf{I - \tilde M})^{-1} \mathbf{\tilde \Delta} (\lambda \mathbf{I - \tilde M})^{-1} \mathbf{\tilde \Delta} (\lambda \mathbf{I - \tilde M - \tilde \Delta})^{-1}d\lambda.
\end{multline*}
We note $A$ and $B$ the first two terms of the right hand side of this equation. We have 
\begin{align*}
\tr (A) & = \sum_{j, k} \tr(\mathbf{w}_j \mathbf{w}_j^{\top} \mathbf{\tilde \Delta} \mathbf{w}_k \mathbf{w}_k^{\top}) \frac{1}{2 i \pi} \oint_{\mathcal{C}} \frac{\lambda d\lambda}{(\lambda - s_j)(\lambda - s_k)} \\
& = \sum_{j} \tr (\mathbf{w}_j^{\top} \mathbf{\tilde \Delta} \mathbf{w}_j) \frac{1}{2 i \pi} \oint_{\mathcal{C}} \frac{\lambda d\lambda}{(\lambda - s_j)^2} \\
& = \sum_{j} \tr(\mathbf{w}_j^{\top} \mathbf{\tilde \Delta} \mathbf{w}_j) \\
& = \sum_{j} \tr(\mathbf{u}_j^{\top} \mathbf{\Delta} \mathbf{v}_j),
\end{align*}
and
\begin{align*}
\tr(B) & = \sum_{j, k, l} \tr(\mathbf{w}_j \mathbf{w}_j^{\top} \mathbf{\tilde \Delta} \mathbf{w}_k \mathbf{w}_k^{\top} \mathbf{\tilde \Delta} \mathbf{w}_l \mathbf{w}_l^{\top}) \frac{1}{2 i \pi} \oint_{\mathcal{C}} \frac{\lambda d\lambda}{(\lambda - s_j)(\lambda - s_k)(\lambda - s_l)} \\
& = \sum_{j, k} \tr(\mathbf{w}_j \mathbf{\tilde \Delta} \mathbf{w}_k \mathbf{w}_k \mathbf{\tilde \Delta} \mathbf{w}_j ) \frac{1}{2 i \pi} \oint_{\mathcal{C}} \frac{\lambda d\lambda}{(\lambda - s_j)^2(\lambda - s_k)}.
\end{align*}
If $s_j = s_k$, the integral is nul. Otherwise, we have
\begin{equation*}
\frac{\lambda}{(\lambda - s_j)^2(\lambda - s_k)} = \frac{a}{\lambda - s_j} + \frac{b}{\lambda - s_k} + \frac{c}{(\lambda - s_j)^2},
\end{equation*}
where
\begin{align*}
a & = \frac{-s_k}{(s_k - s_j)^2}, \\
b & = \frac{s_k}{(s_k - s_j)^2}, \\
c & = \frac{s_j}{s_j - s_k}. 
\end{align*}
Therefore, if $s_j$ and $s_k$ are both inside or outside the interior of $\mathcal{C}$, the integral is equal to zero. So
\begin{align*}
\tr(B) & = \sum_{s_j > 0} \sum_{s_k \leq 0} \frac{-s_k(\mathbf{w}_j^{\top} \mathbf{\tilde \Delta} \mathbf{w}_k)^2}{(s_j - s_k)^2} + \sum_{s_j \leq 0} \sum_{s_k > 0} \frac{s_k(\mathbf{w}_j^{\top} \mathbf{\tilde \Delta} \mathbf{w}_k)^2}{(s_j - s_k)^2} \\
& = \sum_{s_j > 0} \sum_{s_k > 0} \frac{s_k(\mathbf{w}_j^{\top} \mathbf{\tilde \Delta} \mathbf{w}_{-k})^2}{(s_j + s_k)^2} + \sum_{s_j > 0} \sum_{s_k > 0} \frac{s_k(\mathbf{w}_{-j}^{\top} \mathbf{\tilde \Delta} \mathbf{w}_k)^2}{(s_j + s_k)^2} + \sum_{s_j = 0} \sum_{s_k > 0} \frac{(\mathbf{w}_j^{\top} \mathbf{\tilde \Delta} \mathbf{w}_k)^2}{s_k} \\
& = \sum_{s_j > 0} \sum_{s_k > 0} \frac{(\mathbf{w}_{-j}^{\top} \mathbf{\tilde \Delta} \mathbf{w}_k)^2}{s_j + s_k} + \sum_{s_j = 0} \sum_{s_k > 0} \frac{(\mathbf{w}_j^{\top} \mathbf{\tilde \Delta} \mathbf{w}_k)^2}{s_k}.
\end{align*}
For $s_j > 0$ and $s_k > 0$, we have
\begin{equation*}
\mathbf{w}_{-j}^{\top} \mathbf{\tilde \Delta} \mathbf{w}_k = \frac{1}{2} \left( \mathbf{u}_j^{\top} \mathbf{\Delta} \mathbf{v}_k - \mathbf{u}_k^{\top} \mathbf{\Delta} \mathbf{v}_j \right),
\end{equation*}
and for $s_j = 0$ and $s_k > 0$, we have
\begin{equation*}
\mathbf{w}_j^{\top} \mathbf{\tilde \Delta} \mathbf{w}_k = \frac{1}{2} \left( \pm \mathbf{u}_k^{\top} \mathbf{\Delta} \mathbf{v}_j + \mathbf{u}_j^{\top} \mathbf{\Delta} \mathbf{v}_k \right).
\end{equation*}
So 
\begin{equation*}
\tr(B) = \sum_{s_j > 0} \sum_{s_k > 0} \frac{(\mathbf{u}_j^{\top} \mathbf{\Delta} \mathbf{v}_k - \mathbf{u}_k^{\top} \mathbf{\Delta} \mathbf{v}_j)^2}{4(s_j + s_k)} + \sum_{s_j = 0} \sum_{s_k > 0} \frac{(\mathbf{u}_k^{\top} \mathbf{\Delta} \mathbf{v}_j)^2 + (\mathbf{u}_j^{\top} \mathbf{\Delta} \mathbf{v}_k)^2}{2 s_k}.
\end{equation*}

Now, let $\mathcal{C}_0$ be the circle of center $\mathbf{0}$ and radius $s_r / 2$. We can study the perturbation of the singular values of $\mathbf{M}$ equal to zero by computing the trace norm of $\Pi_{\mathcal{C}_0}(\mathbf{\tilde M} + \mathbf{\tilde \Delta}) - \Pi_{\mathcal{C}_0}(\mathbf{\tilde M})$. We have
\begin{multline*}
\Pi_{\mathcal{C}_0}(\mathbf{\tilde M} + \mathbf{\tilde \Delta}) - \Pi_{\mathcal{C}_0}(\mathbf{\tilde M}) = \frac{1}{2 i \pi} \oint_{\mathcal{C}_0} \lambda(\lambda \mathbf{I - \tilde M})^{-1} \mathbf{\tilde \Delta} (\lambda \mathbf{I - \tilde M})^{-1}d\lambda \\
+ \frac{1}{2 i \pi} \oint_{\mathcal{C}_0} \lambda(\lambda \mathbf{I - \tilde M})^{-1} \mathbf{\tilde \Delta} (\lambda \mathbf{I - \tilde M})^{-1} \mathbf{\tilde \Delta} (\lambda \mathbf{I - \tilde M})^{-1}d\lambda \\
+ \frac{1}{2 i \pi} \oint_{\mathcal{C}_0} \lambda(\lambda \mathbf{I - \tilde M})^{-1} \mathbf{\tilde \Delta} (\lambda \mathbf{I - \tilde M})^{-1} \mathbf{\tilde \Delta} (\lambda \mathbf{I - \tilde M - \tilde \Delta})^{-1}d\lambda.
\end{multline*}
Then, if we note the first integral $C$ and the second one $D$, we get
\begin{align*}
C = \sum_{j,k} \mathbf{w}_j \mathbf{w}_j^{\top} \mathbf{\tilde \Delta} \mathbf{w}_k \mathbf{w}_k^{\top} \frac{1}{2 i \pi} \oint_{\mathcal{C}_0} \frac{\lambda d\lambda}{(\lambda - s_j)(\lambda - s_k)}.
\end{align*}
If both $s_j$ and $s_k$ are outside $int(\mathcal{C}_0)$, then the integral is equal to zero. If one of them is inside, say $s_j$, then $s_j = 0$ and the integral is equal to 
\begin{equation*}
\oint_{\mathcal{C}_0} \frac{d\lambda}{\lambda - s_k}
\end{equation*}
Then this integral is non nul if and only if $s_k$ is also inside $int(\mathcal{C}_0)$. Thus
\begin{align*}
C & = \sum_{j, k} \mathbf{w}_j \mathbf{w}_j^{\top} \mathbf{\tilde \Delta} \mathbf{w}_k \mathbf{w}_k^{\top} \ind_{s_j \in int(\mathcal{C}_0)} \ind_{s_k \in int(\mathcal{C}_0)} \\
& = \sum_{s_j = 0} \sum_{s_k = 0} \mathbf{w}_j \mathbf{w}_j^{\top} \mathbf{\tilde \Delta} \mathbf{w}_k \mathbf{w}_k^{\top} \\
& = \mathbf{W}_0 \mathbf{W}_0^{\top} \mathbf{\tilde \Delta} \mathbf{W}_0 \mathbf{W}_0^{\top},
\end{align*}
where $\mathbf{W}_0$ are the eigenvectors associated to the eigenvalue $0$. We have
\begin{equation*}
D = \sum_{j, k, l} \mathbf{w}_j \mathbf{w}_j^{\top} \mathbf{\tilde \Delta} \mathbf{w}_k \mathbf{w}_k^{\top} \mathbf{\tilde \Delta} \mathbf{w}_l \mathbf{w}_l^{\top} \frac{1}{2 i \pi} \oint_{\mathcal{C}_0} \frac{\lambda d\lambda}{(\lambda - s_j)(\lambda - s_k)(\lambda - s_l)}.
\end{equation*}
The integral is not equal to zero if and only if exactly one eigenvalue, say $s_i$, is outside $int(\mathcal{C}_0)$. The integral is then equal to $-1 / s_i$. Thus
\begin{multline*}
D = - \mathbf{W}_0 \mathbf{W}_0^{\top} \mathbf{\tilde \Delta} \mathbf{W}_0 \mathbf{W}_0^{\top} \mathbf{\tilde \Delta} \mathbf{W} S^{-1} \mathbf{W}^{\top} 
- \mathbf{W} S^{-1} \mathbf{W}^{\top} \mathbf{\tilde \Delta} \mathbf{W}_0 \mathbf{W}_0^{\top} \mathbf{\tilde \Delta} \mathbf{W}_0 \mathbf{W}_0^{\top} \\
- \mathbf{W}_0 \mathbf{W}_0^{\top} \mathbf{\tilde \Delta} \mathbf{W} S^{-1} \mathbf{W}^{\top} \mathbf{\tilde \Delta} \mathbf{W}_0 \mathbf{W}_0^{\top},
\end{multline*}
where $S = \Diag(-\mathbf{s}, \ \mathbf{s})$. Finally, putting everything together, we get

\vspace{3mm}

\begin{proposition}
\label{prop_perturbation}
Let $\mathbf{M} = \mathbf{U} \Diag(\mathbf{s}) \mathbf{V}^{\top} \in \mathbb{R}^{n \times p}$, the singular value decomposition of $\mathbf{M}$, with $\mathbf{U} \in \mathbb{R}^{n \times r}$, $\mathbf{V} \in \mathbb{R}^{p \times r}$. Let $\mathbf{\Delta} \in \mathbb{R}^{n \times p}$. We have
\begin{multline*}
\| \mathbf{M + \Delta} \|_* = \| \mathbf{M} \|_* + \| \mathbf{Q} \|_* + \tr (\mathbf{V} \mathbf{U}^{\top} \mathbf{\Delta}) + \\
\sum_{s_j > 0} \sum_{s_k > 0} \frac{(\mathbf{u}_j^{\top} \mathbf{\Delta} \mathbf{v}_k - \mathbf{u}_k^{\top} \mathbf{\Delta} \mathbf{v}_j)^2}{4(s_j + s_k)} + \sum_{s_j = 0} \sum_{s_k > 0} \frac{(\mathbf{u}_{k}^{\top} \mathbf{\Delta} \mathbf{v}_{0j})^2 + (\mathbf{u}_{0j}^{\top} \mathbf{\Delta} \mathbf{v}_{k})^2}{2 s_k} + o(\| \mathbf{\Delta} \|^2),
\end{multline*}
where
\begin{multline*}
\mathbf{Q} = \mathbf{U}_0^{\top} \Delta \mathbf{V}_0 \ - \ \mathbf{U}_0^{\top} \Delta \mathbf{V}_0 \mathbf{V}_0^{\top} \Delta^{\top} \mathbf{U} \Diag(\mathbf{s})^{-1}\\
- \ \Diag(\mathbf{s})^{-1} \mathbf{V}^{\top} \Delta^{\top} \mathbf{U}_0 \mathbf{U}_0^{\top} \Delta \mathbf{V}_0 \ - \ \mathbf{U}_0^{\top} \Delta \mathbf{V} \Diag(\mathbf{s})^{-1} \mathbf{U}^{\top} \Delta \mathbf{V}_0.
\end{multline*}
\label{tldl}
\end{proposition}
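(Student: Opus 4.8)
The plan is to reduce the perturbation of the singular values of $\mathbf{M}$ to a perturbation of the \emph{eigenvalues} of the symmetric Jordan-Wielandt matrix $\mathbf{\tilde M}$, for which the holomorphic functional calculus applies cleanly. Since the nonzero eigenvalues of $\mathbf{\tilde M}$ are exactly $\pm s_1, \ldots, \pm s_r$ and the rest vanish, the trace norm is the sum of the strictly positive eigenvalues of $\mathbf{\tilde M}$. The key structural observation is that the singular values of $\mathbf{M} + \mathbf{\Delta}$ split into two groups that must be analysed separately: those bifurcating from the strictly positive $s_j$, and those emerging from the zero block. I would isolate these two groups with the contours $\mathcal{C}$ (around the $r$ largest eigenvalues) and $\mathcal{C}_0$ (the circle of radius $s_r/2$), whose validity as separators is exactly what the hypothesis $\|\mathbf{\Delta}\|_{op} < s_r/4$ guarantees.

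For the strictly positive singular values, the sum of the perturbed eigenvalues enclosed by $\mathcal{C}$ equals $\tr \Pi_{\mathcal{C}}(\mathbf{\tilde M} + \mathbf{\tilde \Delta})$. Expanding the resolvent to second order via $(\lambda \mathbf{I} - \mathbf{\tilde M} - \mathbf{\tilde \Delta})^{-1} = (\lambda \mathbf{I} - \mathbf{\tilde M})^{-1} + (\lambda \mathbf{I} - \mathbf{\tilde M})^{-1}\mathbf{\tilde\Delta}(\lambda \mathbf{I} - \mathbf{\tilde M} - \mathbf{\tilde\Delta})^{-1}$ and taking traces, the contour integrals of the first two terms are precisely the quantities $\tr(A) = \tr(\mathbf{V}\mathbf{U}^\top \mathbf{\Delta})$ and $\tr(B)$ already evaluated in the appendix, while the third term is $\bigo(\|\mathbf{\Delta}\|^3) = o(\|\mathbf{\Delta}\|^2)$ because the residual resolvent is uniformly bounded on $\mathcal{C}$. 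This group therefore yields $\|\mathbf{M}\|_* + \tr(\mathbf{V}\mathbf{U}^\top \mathbf{\Delta}) + \tr(B)$.

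The zero block is the delicate part and carries the main obstacle: the trace norm is \emph{not} differentiable at a zero singular value, so one cannot read off the signed sum of the near-zero perturbed eigenvalues, which vanishes by the $\pm$ symmetry of $\mathbf{\tilde M} + \mathbf{\tilde \Delta}$. Instead I would keep the full spectral projector $\Pi_{\mathcal{C}_0}(\mathbf{\tilde M} + \mathbf{\tilde \Delta})$, whose second-order expansion is the matrix $C + D$ of the appendix (here $\Pi_{\mathcal{C}_0}(\mathbf{\tilde M}) = 0$). In Jordan-Wielandt coordinates the null-space projector is block-diagonal with blocks $\mathbf{U}_0\mathbf{U}_0^\top$ and $\mathbf{V}_0\mathbf{V}_0^\top$, so $C + D$ is block-antidiagonal, i.e.\ itself a Jordan-Wielandt matrix built from an effective $(n-r)\times(p-r)$ block $\mathbf{Q}$: its nonzero eigenvalues are $\pm(\text{singular values of }\mathbf{Q})$, and these are exactly the new singular values of $\mathbf{M}+\mathbf{\Delta}$. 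Reading off the off-diagonal block gives the first-order term $\mathbf{U}_0^\top \mathbf{\Delta}\mathbf{V}_0$ from $C$ together with the three Schur-complement corrections from $D$ that couple the null space to the range through $\Diag(\mathbf{s})^{-1}$, producing the stated $\mathbf{Q}$. The contribution of this group to the trace norm is then $\|\mathbf{Q}\|_*$, which is \emph{first} order in $\mathbf{\Delta}$, so retaining its second-order corrections is essential.

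Finally I would assemble the two contributions into $\|\mathbf{M}+\mathbf{\Delta}\|_* = \|\mathbf{M}\|_* + \tr(\mathbf{V}\mathbf{U}^\top\mathbf{\Delta}) + \|\mathbf{Q}\|_* + \tr(B) + o(\|\mathbf{\Delta}\|^2)$ and rewrite $\tr(B)$ in the singular-vector notation of the statement. The two bookkeeping points that require the most care are the factor relating eigenvalues of $\mathbf{\tilde M} + \mathbf{\tilde\Delta}$ (which come in $\pm$ pairs) to singular values of $\mathbf{M} + \mathbf{\Delta}$ (counted once, so that the zero block contributes $\|\mathbf{Q}\|_*$ rather than $2\|\mathbf{Q}\|_*$), and verifying that under $\|\mathbf{\Delta}\|_{op} < s_r/4$ both contour expansions converge so that the discarded remainders are genuinely $o(\|\mathbf{\Delta}\|^2)$.
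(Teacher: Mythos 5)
Your proposal is correct and follows essentially the same route as the paper's own proof: Jordan--Wielandt symmetrization, Cauchy residue formula with the contour $\mathcal{C}$ around the strictly positive eigenvalues (yielding $\tr(\mathbf{V}\mathbf{U}^{\top}\mathbf{\Delta})$ and the second-order quadratic terms, with a cubic remainder), and the separate contour $\mathcal{C}_0$ around the zero block whose block-antidiagonal second-order spectral restriction produces the effective matrix $\mathbf{Q}$ contributing $\|\mathbf{Q}\|_*$. You also correctly flag the two genuine subtleties --- the non-differentiability at zero singular values forcing the trace-norm (rather than trace) treatment of the zero block, and the factor-of-two bookkeeping from the $\pm$ eigenvalue pairing --- which the paper handles implicitly.
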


\section{Proof of proposition 1}
\label{proof_prop_1}
In this section, we prove that if the loss function is strongly convex with respect to its second argument, then the solution of the penalized empirical risk minimization is unique.

\vspace{3mm}

Let $\mathbf{\hat w} \in \argmin_{\mathbf{w}} \sum_{i=1}^n \ell(y_i, \mathbf{w}^{\top} \mathbf{x}_i) + \lambda \| \mathbf{X} \Diag (\mathbf{w}) \|_*.$ If $\mathbf{\hat w}$ is in the nullspace of $\mathbf{X}$, then $\mathbf{\hat w} = 0$ and the minimum is unique. From now on, we suppose that the minima are not in the nullspace of $\mathbf{X}$.

\vspace{3mm}

Let $\mathbf{u}, \mathbf{v} \in \argmin_{\mathbf{w}} \sum_{i=1}^n \ell(y_i, \mathbf{w}^{\top} \mathbf{x}_i) + \lambda \| \mathbf{X} \Diag (\mathbf{w}) \|_*$ and $\delta = \mathbf{v} - \mathbf{u}$. By convexity of the objective function, all the $\mathbf{w} = \mathbf{u} + t \delta$, for $t \in ]0, 1[$ are also optimal solutions, and so, we can choose an optimal solution $\mathbf{w}$ such that $ w_i \neq 0$ for all $i$ in the support of $\delta$. Because the loss function is strongly convex outside the nullspace of $\mathbf{X}$, $\delta$ is in the nullspace of $\mathbf{X}$.

\vspace{3mm}

Let $\mathbf{X} \Diag(\mathbf{w}) = \mathbf{U} \Diag (\mathbf{s}) \mathbf{V}^{\top}$ be the SVD of $\mathbf{X} \Diag(\mathbf{w})$. We have the following development around $\mathbf{w}$:
\begin{multline*}
\| \mathbf{X} \Diag (\mathbf{w} + t \delta) \|_* = \| \mathbf{X} \Diag (\mathbf{w})\|_*
 + \tr (\Diag(t \delta) \mathbf{X}^{\top} \mathbf{U} \mathbf{V}^{\top}) + \\
\sum_{s_i > 0} \sum_{s_j > 0} \frac{\tr (\Diag(t \delta) \mathbf{X}^{\top} (\mathbf{u}_i \mathbf{v}_j^{\top} - \mathbf{u}_j \mathbf{v}_i^{\top}))^2}{4(s_i + s_j)} + \sum_{s_i > 0} \sum_{s_j = 0}\frac{\tr (\Diag(t \delta) \mathbf{X}^{\top} \mathbf{u}_i \mathbf{v}_j^{\top})^2}{2 s_i} + o(t^2).
\end{multline*}
We note $S$ the support of $\mathbf{w}$. Using the fact that the support of $\delta$ is included in $S$, we have $\mathbf{X} \Diag(t \delta) = \mathbf{X} \Diag(\mathbf{w}) \Diag (t \gamma)$, where $\gamma_i = \frac{\delta_i}{w_i}$ for $i \in S$ and $0$ otherwise. Then:
\begin{multline*}
\| \mathbf{X} \Diag (\mathbf{w} + t \delta) \|_* = \| \mathbf{X} \Diag (\mathbf{w})\|_*
 + t \gamma^{\top} \diag(\mathbf{V} \Diag(\mathbf{s}) \mathbf{V}^{\top}) + \\
\sum_{s_i > 0} \sum_{s_j > 0} \frac{t^2 \tr \left((s_i - s_j) \Diag(\gamma) \mathbf{v}_i \mathbf{v}_j^{\top}\right)^2}{4(s_i + s_j)} + \sum_{s_i > 0} \sum_{s_j = 0}\frac{t^2 \tr \left( s_i \Diag( \gamma ) \mathbf{v}_i \mathbf{v}_j^{\top} \right)^2}{2 s_i} + o(t^2).
\end{multline*}
For small $t$, $\mathbf{w} + t \delta$ is also a minimum, and therefore, we have:
\begin{align}
& \forall \ s_i > 0, \ s_j > 0, \hspace{5mm} (s_i - s_j) \tr \left( \Diag (\gamma) \mathbf{v}_i \mathbf{v}_j^{\top} \right) = 0, \\
& \forall \ s_i > 0, \ s_j = 0, \hspace{5mm} \tr \left( \Diag(\gamma) \mathbf{v}_i \mathbf{v}_j^{\top} \right) = 0.
\end{align}
This could be summarized as
\begin{equation}
\forall \ s_i \neq s_j, \hspace{5mm} \mathbf{v}_i^{\top} (\Diag(\gamma) \mathbf{v}_j) = 0.
\end{equation}
 This means that the eigenspaces of $\Diag(\mathbf{w}) \mathbf{X}^{\top} \mathbf{X} \Diag(\mathbf{w})$ are stable by the matrix $\Diag(\gamma)$. Therefore, $\Diag(\mathbf{w}) \mathbf{X}^{\top} \mathbf{X} \Diag(\mathbf{w})$ and $\Diag(\gamma)$ are simultaneously diagonalizable and so, they commute. Therefore:
\begin{equation}
\label{gamma_const}
\forall \ i, j \in S, \ \ \ \sigma_{ij} \gamma_i = \sigma_{ij} \gamma_j
\end{equation}
where $\sigma_{ij} = [\mathbf{X}^{\top} \mathbf{X}]_{ij}$. We define a partition $(S_k)$ of $S$, such that $i$ and $j$ are in the same set $S_k$ if there exists a path $i = a_1, ..., a_m = j$ such that $\sigma_{a_n, a_{n+1}} \neq 0$ for all $n \in \{1, ..., m-1 \}$. Then, using equation (\ref{gamma_const}), $\gamma$ is constant on each $S_k$. $\delta$ being in the nullspace of $\mathbf{X}$, we have:
\begin{align}
0 & = \delta^{\top} \mathbf{X}^{\top} \mathbf{X} \delta \\
& = \sum_{S_k} \sum_{S_l} \delta_{S_k}^{\top} \mathbf{X}^{\top} \mathbf{X} \delta_{S_l} \\
& = \sum_{S_k} \delta_{S_k}^{\top} \mathbf{X}^{\top} \mathbf{X} \delta_{S_k} \\
& = \sum_{S_k} \| \mathbf{X} \delta_{S_k} \|_2^2.
\end{align}
So for all $S_i$, $\mathbf{X} \delta_{S_i} = 0$. Since a predictor $\mathbf{X}_i$ is orthogonal to all the predictors belonging to other groups defined by the partition $(S_k)$, we can decompose the norm $\Omega$:
\begin{equation}
\| \mathbf{X} \Diag(\mathbf{w}) \|_* = \sum_{S_k} \| \mathbf{X} \Diag(\mathbf{w}_{S_k}) \|_*.
\end{equation}
We recall that $\gamma$ is constant on each $S_k$ and so $\delta_{S_k}$ is colinear to $\mathbf{w}_{S_i}$, by definition of $\gamma$. If $\delta_{S_i}$ is not equal to zero, this means that $\mathbf{w}_{S_i}$, which is not equal to zero, is in the nullspace of $\mathbf{X}$. Replacing $\mathbf{w}_{S_i}$ by $0$ will not change the value of the data fitting term but it will strictly decreases the value of the norm $\Omega$. This is a contradiction with the optimality of $\mathbf{w}$. Thus all the $\delta_{S_i}$ are equal to zero and the minimum is unique.

\section{Proof of proposition 3}
For the first inequality, we have
\begin{align*}
\| \mathbf{w} \|_2 & = \| \mathbf{P} \Diag (\mathbf{w}) \|_F \\
& \leq \| \mathbf{P} \Diag (\mathbf{w}) \|_*.
\end{align*}
For the second inequality, we have
\begin{align*}
\| \mathbf{P} \Diag(\mathbf{w}) \|_* & = \max_{\| \mathbf{M} \|_{op} \leq 1} \tr \left( \mathbf{M}^{\top} \mathbf{P} \Diag(\mathbf{w}) \right) \\
& = \max_{\| \mathbf{M} \|_{op} \leq 1} \diag \left( \mathbf{M}^{\top} \mathbf{P} \right)^{\top} \mathbf{w} \\
& \leq \max_{\| \mathbf{M} \|_{op} \leq 1} \sum_{i=1}^p | \mathbf{M}^{(i) \top} \mathbf{P}^{(i)} | \ | w_i | \\
& \leq \| \mathbf{w} \|_1.
\end{align*}
The first equality is the fact that the dual norm of the trace norm is the operator norm and the second inequality uses the fact that all matrices of operator norm smaller than one have columns of $\ell_2$ norm smaller than one.

\end{document}